\theoremstyle{thmstyletwo}%
\newtheorem{theorem}{Theorem}[section]
\newtheorem{proposition}[theorem]{Proposition}%
\newtheorem{remark}{Remark}
\newtheorem{lemma}[theorem]{Lemma}%
\newtheorem{definition}{Definition}
\newtheorem{assumption}{Assumption}
\def\XXint#1#2#3{{\setbox0=\hbox{$#1{#2#3}{\int}$ }
\vcenter{\hbox{$#2#3$ }}\kern-.6\wd0}}
\numberwithin{equation}{section}
\begin{document}

\author{A. Martina Neuman\thanks{Department of Computational Mathematics Science and Engineering, Michigan State University, 428 S Shaw Lane, East Lansing, 48824, USA; E-mail: \texttt{neumana6@msu.edu}.}
}

\date{}

\vspace{-0.25in}

\title{Graph Laplacians on Shared Nearest Neighbor graphs and graph Laplacians on $k$-Nearest Neighbor graphs having the same limit}

\maketitle

\begin{abstract}
A Shared Nearest Neighbor (SNN) graph is a type of graph construction using shared nearest neighbor information, which is a secondary similarity measure based on the rankings induced by a primary $k$-nearest neighbor ($k$-NN) measure. SNN measures have been touted as being less prone to the curse of dimensionality than conventional distance measures. Thus, methods using SNN graphs have been widely used in applications, particularly in clustering high-dimensional data sets and finding outliers subspaces of high dimensional data. Despite this, the theoretical study of SNN graphs and graph Laplacians remains unexplored. In this pioneering work, we make the first contribution in this direction. We show that large scale asymptotics of an SNN graph Laplacian reach a consistent continuum limit; this limit is the same as that of a $k$-NN graph Laplacian. Moreover, we show that the pointwise convergence rate of the graph Laplacian is linear with respect to $(k/n)^{1/m}$ with high probability.
\end{abstract}

{\bf Keywords}: Shared Nearest Neighbor graphs, graph Laplacians, Laplace-Beltrami operator, graph Laplacian consistency, rate of convergence


\section{Introduction}

\begin{figure}
    \centering
    \includegraphics[width=1\textwidth]{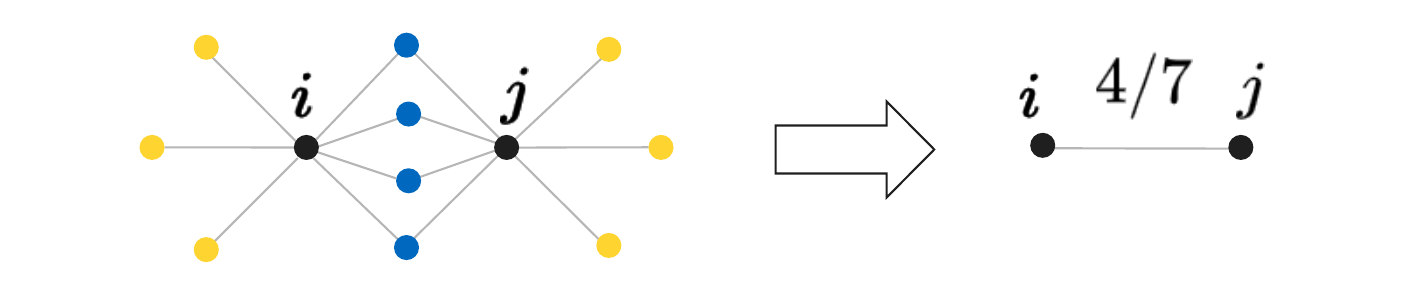}
    \caption{An SNN graph constructed on top of a $7$-NN graph: here, nodes $i$ and $j$ have $4$ $7$-nearest neighbors; hence, they are SNN neighbors of each other and assigned a ({\it simcos}) edge weight of $4/7$.}
    \label{fig:SNN}
\end{figure}

Graph Laplacians are undoubtedly a ubiquitous tool in machine learning. In machine learning, when a data set $X=\{x_1,\cdots,x_{n}\}\subset\mathbb{R}^{d}$ is sampled out of a data generating measure $\mu$ supported on a Riemannian submanifold $\mathcal{M}$, their neighborhood graphs are treated as discrete approximations of $\mathcal{M}$, and the spectrum of the resulted graph Laplacians are used to extract its intrinsic structural information. For example, if the manifold is of a low dimension $m$, then this dimension can be detected by the first $m$ eigenvectors corresponding to the $m$ largest eigenvalues of the graph Laplacians. Even though in practice, the underlying manifold (and the associated measure) is technically a priori unknown or unseen, this so-called {\it manifold assumption} \cite{chapelle2006semi} is fairly common, since strong dependencies are often exhibited between the individual feature vectors $x_{i}$. It has become the basis for many dimension reduction methods using spectra of graph Laplacians \cite{belkin2003laplacian, coifman2006diffusion, coifman2005geometric}. \\
Methods using graph Laplacians are remarkably successful in semi-supervised learning \cite{zhu2002learning, zhu2003semi, belkin2002semi}, due to that graph Laplacians are generators of the diffusion process on graphs, therefore suitable for studying label propagation, and in spectral clustering \cite{von2007tutorial, ng2001spectral}, due to the special properties possessed by their eigenvectors \cite{mohar1997some}. Similarly, (weighted) Laplace-Beltrami operators are the generators of the diffusion process on manifolds, and their spectra capture important geometric properties of the manifolds \cite{chavel1984eigenvalues}. Since point clouds arising as discretizations of Riemannian manifolds are distinguishable from arbitrary ones, it's reasonable to presume that large sample size asymptotics of their graph Laplacians mimic the behavior of the (weighted) manifold Laplace-Beltrami operators. It turns out that a passage from the discrete operator to the continuum one depends on (1) the graph and (2) the Laplacian constructions, as well as (3) the rate at which the graph connectivity parameters go to zero. We discuss this matter in more detail below, starting with an introduction to two well-known graph generations and the Shared Nearest Neighbor graphs. \\

Roughly speaking, a graph on $X$ is built on a similarity measure. In many graphs, this similarity measure is both {\it primary} and distance-based. For example, in an $\epsilon$-graph, two points $x_{i}, x_{j}\in X$ are connected if their Euclidean distance in $\mathbb{R}^{d}$ is at most some chosen $\epsilon>0$. In another, closely related, {\it directed} $k$-Nearest Neighbor graph ($k$-NN), a directed edge is drawn from $x_{i}$ to $x_{j}$, if $x_{j}$ is among the $k$-nearest neighbors of $x_{i}$, for some $k\in\mathbb{N}$ of choice. Both graphs are therefore {\it proximity} graphs \cite{toth2017handbook}, with $k$-NN graph additionally connecting vertices by ranking their distances. However, it's been known that similarity measures based on distances are sensitive to variations within a data distribution, or the ambient dimension $d$ (in fact, questions were raised as to whether the concept of the nearest neighbor is meaningful in high dimensions \cite{beyer1999nearest}). The need for a similarity measure that is better at handling high dimensional data led to the invention of {\it secondary} similarity measure. A special example of graphs constructed from this type of similarity measure is the Shared Nearest Neighbor (SNN) graph, the subject of our investigation. Typically, in an SNN graph, once the primary similarity of $k$-nearest neighbors is used - where $k$-nearest neighbors for each point $x_{i}$ are already determined - a secondary similarity is applied, by ranking affinity induced by the primary one: $x_{i}, x_{j}$ are connected if $x_{i}, x_{j}$ share a $k$-nearest neighbor in common. More concretely, let $NN(x)\subset X\setminus\{x\}$ be the set of $k$-nearest neighbors of $x$ and $card(NN(x))$ denote its cardinality. Then $x_{i}, x_{j}$ are SNN neighbors if 
\begin{equation*} 
    card(NN(x_{i})\cap NN(x_{j}))>0
\end{equation*}
(see Figure \ref{fig:SNN} for an illustration). An undirected edge is drawn between them with an edge weight of either the intersection size $card(NN(x_{i})\cap NN(x_{j}))$ or the {\it cosine measure} \cite{houle2010can}
\begin{equation} \label{def:cosineweight}
    \text{simcos }(x_{i},x_{j}):=\frac{card(NN(x_{i})\cap NN(x_{j}))}{k}, 
\end{equation}
aptly named since it's equivalent to the cosine of the angle between the zero-one set membership vectors of $NN(x_{i})$ and $NN(x_{j})$. This measure \eqref{def:cosineweight} was often used as a local density for clustering \cite{ertoz2003finding, houle2003navigating}. It's been reported, and empirically confirmed in \cite{houle2010can}, that SNN measures are stable and less prone to the curse of high dimensions than conventional distance measures. As such, they have found use in clustering algorithms for large or high dimensional data sets \cite{ertoz2003finding, guha1998cure, jarvis1973clustering, houle2003navigating, houle2008relevant} as well as in finding outliers in high dimensions \cite{kriegel2009outlier}. Despite their popularity with the computing community \cite{kumari2016scalable, xu2015identification, ertoz2004finding, faustino2014kd, antunes2014fast}, the theoretical understanding of SNN graphs and their Laplacians remain lacking, to the author's knowledge.\\

There are three main types of graph Laplacians studied so far in machine learning; they are, normalized, unnormalized and random walk Laplacians. Precise definitions will be given in \ref{sec:resstate}. A plethora of work has been devoted to the pointwise consistency of graph Laplacians on $\epsilon$-graphs \cite{singer2006graph, hein2007graph, ting2011analysis, belkin2005towards, gine2006empirical, calder2022improved}, but to a lesser extent, of $k$-NN graph Laplacians \cite{calder2022improved, cheng2022convergence}. The key idea in all these pointwise convergence results is that the graph connectivity parameter must be kept small yet relatively fixed with respect to the sample size $n$ as $n\to\infty$. Notably, it was shown in \cite{hein2007graph} for $\epsilon$-graphs, where this parameter is precisely $\epsilon$, that if $\epsilon\to 0$ and $n\epsilon^{m+2}/\log n\to\infty$, then
\begin{equation} \label{hein}
    \mathcal{L}^{\epsilon}f(x)\to Cp^{-1}(x)div(p^2\nabla f)(x)
\end{equation}
almost surely for a non-boundary point $x\in\mathcal{M}$, where $\mathcal{L}^{\epsilon}$ denotes the unnormalized $\epsilon$-graph Laplacian and $p$ the nonvanishing density of $\mu$. The optimal rate in \eqref{hein} is $\epsilon=\epsilon(n)=O((\log n/n)^{1/(m+4)})$. In \cite{calder2022improved}, \eqref{hein} was recovered for a compact, boundariless manifold $\mathcal{M}$. A consistency result for {\it undirected} $k$-NN graph was also established in the same paper, where the authors showed that, with high probability,
\begin{equation} \label{calder}
    \mathcal{L}^{k}f(x_{i})\to Cp^{-1}(x)div(p^{1-2/m}\nabla f)(x_{i})
\end{equation}
uniformly for every $x_{i}\in X$ and linearly in terms of $(k/n)^{1/m}$, whenever 
\begin{equation} \label{logscale}
    C(\log n)^{\frac{m}{m+4}}n^{\frac{4}{m+4}}\ll k\ll n.
\end{equation}
Here, $\mathcal{L}^{k}$ denotes the unnormalized $k$-NN graph Laplacian, and $(k/n)^{1/m}$ plays the role of graph connectivity parameter. \\

The inspiration for this work starts with the paper \cite{calder2022improved}. In a similar spirit with \eqref{calder}, we seek to uncover the effective limit operator of the unnormalized SNN graph Laplacian as $(k/n)^{1/m}\to 0$. We reveal that this limit is the same as in \eqref{calder},
\begin{equation*}
    -\frac{1}{2p}\nabla\cdot (p^{1-2/m}\nabla) = -\frac{1}{2p}div(p^{1-2/m}\nabla)=:\Delta^{snn}
\end{equation*}
This means, although an SNN graph is built on a $k$-NN graph, their Laplacians converge to the same operator. It also means that manifold spectral information is saturated at the primary level with the use of $k$-NN graph. Furthermore, we show that when \eqref{logscale} is satisfied, then with high probability,  
\begin{equation} \label{neuman}
    |\mathcal{L}^{snn}f(x_{i})-\frac{1}{m+2}\Delta^{snn}f(x_{i})|=O((k/n)^{1/m}),
\end{equation}
where $\mathcal{L}^{snn}$ denotes the unnormalized SNN graph Laplacian. To our knowledge, we're the first to establish the pointwise consistency result for SNN graph Laplacians. It is expected to serve as the first installment toward studying consistency of graph-based algorithms on SNN graphs. Particularly, we plan to investigate the convergence of SNN-graph-based spectral clustering, which we expect the nonasymptotic and quantitative nature of \eqref{neuman} would be greatly beneficial.\\

\noindent {\it Outline.} The paper is organized as follows. In Section \ref{sec:setup}, we give all the basic set-up and precise constructions of SNN graphs and graph Laplacians. In the first half of Section \ref{sec:main}, we state our assumptions, our main result as well as its ramifications. In the second half, we give an outline of the proof and recall necessary geometry and concentration results. In Section \ref{sec:mainproof}, we present our main proof, with some tedious steps abstracted away in the Appendix \ref{appx}.

\section{Set-up} \label{sec:setup}

\subsection{Preliminary} 

{\it Basic manifold set-up.} Let $\mathcal{M}$ be a compact, connected, boundariless, orientable, smooth $m$-dimensional manifold ($m\geq 2$) embedded in $\mathbb{R}^{d}$. Some essential constants intrinsic to $\mathcal{M}$ are: an upper bound $K$ on the absolute values of the sectional curvatures, the reach $R$ of $\mathcal{M}$ and a lower bound $i_0$ on the injectivity radius of $\mathcal{M}$. We let $\mathcal{M}$ inherit the Riemannian structure induced by the ambient space $\mathbb{R}^{d}$. We write $d\mathcal{V} = dVol_{\mathcal{M}}$ to denote the volume form on $\mathcal{M}$ with respect to the induced metric tensor, and $d(x,y)$ to denote the geodesic distance between $x,y\in\mathcal{M}$. Furthermore, let $\mu$ be a probability measure supported on $\mathcal{M}$ and $p$ be its density, such that
\begin{equation} \label{densitybds}
    0<p_{min}\leq p(x)\leq p_{max}<\infty
\end{equation}
for every $x\in\mathcal{M}$. We assume $p\in C^2(\mathcal{M})$; i.e., $p$ (expressed in normal coordinates) has continuous second partial derivatives (see also Remark \ref{rem:tilde} below). These regularity assumptions on $p$ are fairly common in theoretical works on graph based learning, as they allow for tangible connections between learning algorithms and PDE theory to be established. \\

\noindent {\it Basic analytic set-up.} We define $L^2(\mu)$ to be the space of $L^2$-functions on $\mathcal{M}$ with respect to $\mu$, endowed with the inner product
\begin{equation} \label{def:L^2inn}
    \langle f, g\rangle_{\mu} := \int_{\mathcal{M}} f(x)g(x)\,d\mu(x)=\int_{\mathcal{M}} f(x)g(x)p(x)\,d\mathcal{V}(x) \quad f, g\in L^2(\mu). 
\end{equation}
We say $f\in L^2(\mu)$ if $\langle f, f\rangle_{\mu}=:\|f\|^2_{\mu}<\infty$.\\

When $X=\{x_1,\cdots, x_{n}\}$ is a set of i.i.d. samples from $\mu$, we let $\mu_{n}$ be the usual empirical measure
\begin{equation} \label{def:empm}
    \mu_{n}:=\frac{1}{n}\sum_{i=1}^{n}\delta_{x_{i}}.
\end{equation}
Similarly as above, we define $L^2(\mu_{n})$ to be the space of functions on $X$ with the inner product
\begin{equation*} 
    \langle u, v\rangle_{\mu_{n}} := \frac{1}{n}\sum_{i=1}^{n} u(x_{i})v(x_{i}) \quad u, v\in L^2(\mu_{n}),
\end{equation*}
and $\|u\|^2_{\mu_{n}}:=\langle u,u\rangle_{\mu_{n}}$.\\

\noindent {\it Basic notation agreement.} We allow abstract, analytic inequality constants $C,c$ to change their values from one line to the next; moreover, they are implicitly dependent on the following intrinsic values of the manifold: $m, K, i_0, R$ as well as $\alpha$, where $\alpha:=Vol_{m}(B(0,1))$, the $m$-dimensional volume of the unit ball. We will often indicate but not fully disclose an analytic constant's dependence on the density $p$. For instance, the following constant will play a role in our analysis to ensure the convergence of the SNN graph Laplacian,
\begin{equation} \label{def:epsM}
    c_{\mathcal{M}}:=\alpha^{1/m} p_{min}^{1/m}\min\{1,i_0, K^{-1/2}, R/2\};
\end{equation}
we can simply write $c_{\mathcal{M}} = C_{p}\min\{1,i_0, K^{-1/2}, R/2\}$. We will also write $A\asymp_{p} B$ to mean $A\leq C_{p}B$ and $B\leq C_{p}A$. In various statements, we choose different expressions of constant parametric dependence; these notations are local and defined where they are used.\\
To distinguish different types of geometric balls, we write $\mathcal{B}(x,r)$ to denote a geodesic ball in $\mathcal{M}$ with center $x$ and radius $r$, $B(x,r)$ to denote a Euclidean ball, and $\overline{B(x,r)}$ its topological closure, either in $\mathbb{R}^{d}$ or $\mathbb{R}^{m}$, depending on context.\\
We abuse the use of the notation $|\cdot|$, which either means an absolute value of a quantity, or a Euclidean vector norm, or Lebesgue measure of a set, depending on context. Finally, we define an (essential) support of a function $f: \mathbb{R}^{m}\supset U\to\mathbb{R}$ to be
\begin{equation*}
    supp(f):= U\setminus\bigg(\bigcup\{B(x,r):x\in\mathbb{R}^{m}, r>0 \text{ s.t. } |\{z\in U:f(z)\not= 0\}\cap B(x,r)|=0\}\bigg).
\end{equation*}

\subsection{Basic graph Laplacian constructions} \label{sec:resstate}

Let $\Gamma=(X,\{e_{ij}\}_{ij})$ denote an undirected, weighted (finite) graph on the set of nodes $X=\{x_1,\cdots,x_{n}\}$, where each edge $e_{ij}$ is given an edge weight $w_{ij}$. Let $W\in\mathbb{R}^{n\times n}$ be a symmetric matrix whose $ij$th entry is $w_{ij}$ ({\it weight matrix}) and $D\in\mathbb{R}^{n\times n}$ be diagonal whose $ii$th entry is $d_{i}:=\frac{1}{n}\sum_{j} w_{ij}$ ({\it degree matrix}).\\
The unnormalized or combinatorial graph Laplacian \cite{hein2007graph, chung1996combinatorial} is defined to be $\mathcal{L}^{(u)} := D-W$, or
\begin{equation} \label{def:ugraphLap}
    \mathcal{L}^{(u)}f(x_{i}) := d_{i}f(x_{i})-\frac{1}{n}\sum_{j}w_{ij}f(x_{j})
\end{equation}
for $f\in L^2(\mu_{n})$. To compare, the normalized and random walk graph Laplacians are defined respectively as follows \cite{hein2007graph},
\begin{equation*}
    \mathcal{L}^{(n)}:=I-D^{-1/2}WD^{-1/2} \quad\text{ and }\quad \mathcal{L}^{(rw)}:=I-D^{-1}W,
\end{equation*}
where $I$ stands for the identity matrix. We note that when $w_{ij}$ takes the form of a kernel function of the distance between $x_{i}, x_{j}$, e.g., $w_{ij}=\upsilon(|x_{i}-x_{j}|)$, for some non-increasing function $\upsilon: [0,\infty)\to [0,1]$, the graph $\Gamma$ is a proximity graph. As we shall see below, an SNN graph is not a proximity graph.  

\subsection{SNN graph and SNN graph Laplacian}

We first introduce a $k$-relation, which we will use to define neighbors on an SNN graph of the data set $X=\{x_1,\cdots,x_{n}\}$.

\begin{definition} \label{def:krelations} We define a relation $\sim_{knn}$ on $X\times X$ by declaring
\begin{equation} \label{def:krelation}
    x_{i}\sim_{knn} x_{j}
\end{equation}
if $x_{j}\in X\setminus\{x_{i}\}$ is among the $k$-nearest neighbors (in Euclidean distance) of $x_{i}$. 
\end{definition}

Note that $\sim_{knn}$ in \eqref{def:krelation} is a {\it directed, anti-symmetric} relation. A {\it symmetric} relation, $x_{i}\sim x_{j}$ if $x_{i}\sim_{knn} x_{j}$ or $x_{j}\sim_{knn} x_{i}$, or a {\it mutual} one, $x_{i}\sim x_{j}$ if $x_{i}\sim_{knn} x_{j}$ and $x_{j}\sim_{knn} x_{i}$, can be defined out of \eqref{def:krelation}, but we will only need it for the following definition of SNN neighbors.

\begin{definition} \label{def:snn}
Let $x_{i}, x_{j}\in X$. We say that $x_{i}, x_{j}$ are SNN neighbors if there exists $x_{l}\in X\setminus\{x_{i},x_{j}\}$ such that
\begin{equation} \label{sneighbors}
    x_{i} \sim_{knn} x_{l} \quad\text{ and }\quad x_{j}\sim_{knn} x_{l}.
\end{equation}
Moreover, when \eqref{sneighbors} happens, we write, $x_{i}\sim_{snn} x_{j}$, and say that $x_{l}$ is a shared $k$-NN neighbor of both $x_{i}, x_{j}$.
\end{definition}

Hence, in an SNN graph $\Gamma$, an edge $e_{ij}$ exists between $x_{i}, x_{j}$ if both nodes have a common neighbor $x_{l}$ in the sense of \eqref{sneighbors}. The defined $\sim_{snn}$ relation is symmetric, and the resulted SNN graph $\Gamma = (X,\{e_{ij}\}_{i,j})$ is undirected. To complete the construction, we need to assign each edge a weight that reflects the number of shared $k$-NN neighbors between the edge nodes. We do this next.


\subsubsection{Unnormalized SNN graph Laplacian}

Since the density $p$ is bounded below (\eqref{densitybds}), with probability one, the requirement \eqref{def:krelation} is equivalent to
\begin{equation} \label{krelationequiv}
    \mu_{n}(\overline{B(x_{i},r)})\leq \frac{k}{n}
\end{equation}
where $r=|x_{i}-x_{j}|$ and $\mu_{n}$ is the empirical measure in \eqref{def:empm}. Therefore, \eqref{krelationequiv} can serve as a quantification of \eqref{def:krelation}. Following this, we define, for every $\epsilon>0$
\begin{equation} \label{def:Ne}
    N_{\epsilon}(x) :=\sum_{i: 0<|x_{i}-x|\leq\epsilon} 1.
\end{equation}
Now $N_{\epsilon}(x)$ captures the number of random samples $x_{i}$ in the punctured Euclidean $\epsilon$-neighborhood of $x$. It's most fitting to take $x\in X$ in \eqref{def:Ne}; however, $x$ can also be a location on the manifold. A known estimate for $N_{\epsilon}(x)$ is as follows.

\begin{lemma} \label{calderlem:N} \cite[Lemma 3.8]{calder2022improved} Let $x\in\mathcal{M}$ and suppose that $\epsilon<c_{\mathcal{M}}$ (see \eqref{def:epsM}). Then for $\epsilon^2\leq\delta\leq 1$,
\begin{equation*} 
    \mathbb{P}(|N_{\epsilon}(x)-\alpha p(x)n\epsilon^{m}|\geq C\delta n\epsilon^{m})\leq 2\exp(-c\delta^2 n\epsilon^{m}). 
\end{equation*}
\end{lemma}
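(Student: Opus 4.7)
The plan is to view $N_\epsilon(x)$ as a binomial random variable and combine a standard concentration inequality with a geometric volume estimate that relates the Euclidean ball $B(x,\epsilon)\cap\mathcal{M}$ to the leading-order term $\alpha p(x)\epsilon^m$. Concretely, I would write
\[
    N_\epsilon(x) = \sum_{i=1}^{n} Y_i, \qquad Y_i := \mathbf{1}_{\{0<|x_i-x|\leq \epsilon\}},
\]
so that the $Y_i$ are i.i.d.\ Bernoulli with parameter $q := \mu(B(x,\epsilon)\cap\mathcal{M})$ (the event $|x_i-x|=0$ has probability zero since $p$ is bounded). The total count $N_\epsilon(x)$ has mean $nq$ and variance $nq(1-q)\leq nq$.

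The first key step is to show $|nq - \alpha p(x) n\epsilon^m|\lesssim n\epsilon^{m+2}$. Because $\epsilon<c_{\mathcal{M}}\leq C_p\min\{i_0,K^{-1/2},R/2\}$, the portion $B(x,\epsilon)\cap\mathcal{M}$ lies inside a single normal coordinate chart, and the standard volume comparison for a submanifold of bounded sectional curvature and reach gives
\[
    \mathrm{Vol}_m(B(x,\epsilon)\cap\mathcal{M}) = \alpha \epsilon^m\bigl(1+O(K\epsilon^2)\bigr),
\]
while the $C^2$-regularity of $p$ together with the bound $p_{\min}\leq p\leq p_{\max}$ yields
\[
    \int_{B(x,\epsilon)\cap\mathcal{M}} p(y)\,d\mathcal{V}(y) = p(x)\,\mathrm{Vol}_m(B(x,\epsilon)\cap\mathcal{M}) + O(\epsilon^{m+2}).
\]
Combining these bounds, and using $\delta\geq \epsilon^2$, I obtain $|nq-\alpha p(x)n\epsilon^m|\leq C\delta n\epsilon^m/2$ for a suitable constant $C$ depending only on $m, K, R, i_0$ and $\|p\|_{C^2}$.

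The second step is a Bernstein/Chernoff concentration for $N_\epsilon(x)-nq$. Since each $|Y_i-\mathbb{E}Y_i|\leq 1$ and $\mathrm{Var}(Y_i)\leq q\leq C p_{\max}\epsilon^m$, Bernstein's inequality gives
\[
    \mathbb{P}\bigl(|N_\epsilon(x)-nq|\geq t\bigr)\leq 2\exp\!\left(-\frac{t^2/2}{Cn\epsilon^m + t/3}\right).
\]
Choosing $t = (C/2)\delta n\epsilon^m$ and using $\delta\leq 1$ and $\epsilon<c_{\mathcal{M}}$ so that $t\lesssim n\epsilon^m$, the denominator is controlled by $n\epsilon^m$ and the exponent becomes $-c\delta^2 n\epsilon^m$.

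Putting the two steps together via the triangle inequality finishes the proof. The main obstacle is the geometric step: establishing the volume expansion $\mathrm{Vol}_m(B(x,\epsilon)\cap\mathcal{M})=\alpha\epsilon^m(1+O(\epsilon^2))$ with an explicit constant depending only on $K,R,i_0$, which requires comparing the Euclidean ball to a geodesic ball $\mathcal{B}(x,\epsilon(1+O(K\epsilon^2)))$ via Rauch/Bishop--Gromov-type estimates; everything else (the Bernstein bound, the Taylor expansion of $p$ in normal coordinates) is routine. The threshold $\delta\geq\epsilon^2$ is precisely the condition that makes the systematic bias from the volume expansion absorbable into the stochastic deviation term.
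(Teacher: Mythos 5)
Your proposal is correct and follows essentially the same route as the source: the paper itself only cites this result from \cite[Lemma 3.8]{calder2022improved} without reproving it, and that reference's argument is exactly your two-step decomposition (a Bernstein/Chernoff bound for the Bernoulli sum plus the geometric bias estimate $|nq-\alpha p(x)n\epsilon^{m}|\leq Cn\epsilon^{m+2}$ via the Jacobian and reach comparisons, absorbed using $\delta\geq\epsilon^2$). Indeed, the measure-concentration tool the paper does record, Lemma \ref{calderlem:conc}, is precisely your argument specialized to $\psi\equiv 1$.
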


Following \cite{calder2022improved}, we let
\begin{equation} \label{def:epsk}
    \varepsilon_{k}(x) := \min\{\epsilon>0: N_{\epsilon}(x)\geq k\}.
\end{equation}
As before, $x$ can be a point on the manifold. By \eqref{krelationequiv}, $N_{\varepsilon_{k}}(x)=k$, if $x\in X$. A similar statement can be said for general $x\in\mathcal{M}$. To see this, we mention the following result which dictates that $\varepsilon_{k}(x)^{m}\asymp_{p} k/n$ with high probability.

\begin{lemma} \label{calderlem:epsk} \cite[Lemma 3.9]{calder2022improved} 
Let $1\leq k\leq cnc_{\mathcal{M}}^{m}$. Then for $x\in\mathcal{M}$ and $C(k/n)^{2/m}\leq\delta\leq 1$,
\begin{equation*}
    \mathbb{P}(|\alpha p(x)\varepsilon_{k}(x)^{m}-k/n|\geq C\delta k/n)\leq 4\exp(-c\delta^2 k).
\end{equation*}
\end{lemma}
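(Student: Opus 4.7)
The plan is to invert Lemma \ref{calderlem:N}: since $\epsilon\mapsto N_\epsilon(x)$ is non-decreasing and integer-valued, the definition \eqref{def:epsk} of $\varepsilon_k(x)$ is equivalent to the pair of equivalences $\varepsilon_k(x)\leq\epsilon \Leftrightarrow N_\epsilon(x)\geq k$ and $\varepsilon_k(x)>\epsilon \Leftrightarrow N_\epsilon(x)<k$. Hence, to show that $\alpha p(x)\varepsilon_k(x)^m$ concentrates around $k/n$ it suffices to pick two deterministic radii $\epsilon_\pm$ just above and below the target value $(k/(\alpha p(x) n))^{1/m}$ at which $N_{\epsilon_+}(x)\geq k$ and $N_{\epsilon_-}(x)<k$ hold with high probability, and Lemma \ref{calderlem:N} gives precisely the one-sided control needed.

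Concretely, for a constant $\eta>0$ to be fixed shortly, define $\epsilon_\pm$ by $\alpha p(x) n \epsilon_\pm^m = (1\pm\eta\delta)k$. Applying Lemma \ref{calderlem:N} at radius $\epsilon_+$ with its parameter equal to $\delta$ yields, with probability at least $1-2\exp(-c\delta^2 n\epsilon_+^m)$,
\begin{equation*}
    N_{\epsilon_+}(x)\geq (1-C\delta)\alpha p(x) n\epsilon_+^m = (1-C\delta)(1+\eta\delta)k,
\end{equation*}
which exceeds $k$ provided $\eta$ is taken a sufficiently large absolute multiple of the constant $C$ appearing in Lemma \ref{calderlem:N} (after possibly shrinking the admissible range of $\delta$ so that $C\delta\leq 1/2$, which is harmless as we are free to adjust the constants in the stated hypothesis). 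The symmetric argument at $\epsilon_-$ produces $N_{\epsilon_-}(x)<k$ with the same failure probability. Since $n\epsilon_\pm^m\asymp_p k$ by construction, the exponent simplifies to $-c\delta^2 k$ after absorbing constants and the density bounds \eqref{densitybds} into $c$, and a union bound over the two events yields the advertised $4\exp(-c\delta^2 k)$.

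The main technical point is to verify that Lemma \ref{calderlem:N} is actually applicable at the two chosen radii, which requires both $\epsilon_\pm<c_\mathcal{M}$ and $\epsilon_\pm^2\leq\delta\leq 1$. The first follows from the standing hypothesis $k\leq cnc_\mathcal{M}^m$ together with \eqref{densitybds} and \eqref{def:epsM}, provided the constant $c$ is chosen small enough relative to $\eta$ and $p_{max}$; the second is precisely why the present lemma imposes the lower bound $\delta\geq C(k/n)^{2/m}$, since $\epsilon_\pm^2\asymp_p (k/n)^{2/m}$. Once these bookkeeping constraints are respected the argument is a short sandwich, and the only conceptually nontrivial input is Lemma \ref{calderlem:N} itself; the chief potential pitfall is to keep track of how the constants in the two applications of Lemma \ref{calderlem:N} interact with $\eta$ and with the density bounds so that a single pair $(C,c)$ governs both the failure probability and the lower bound on $\delta$.
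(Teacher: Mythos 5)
The paper does not prove this lemma itself --- it is imported verbatim from \cite[Lemma 3.9]{calder2022improved} --- so there is no in-paper proof to compare against; your inversion argument (sandwiching $\varepsilon_k(x)$ between deterministic radii $\epsilon_\pm$ and applying Lemma \ref{calderlem:N} at each) is exactly the standard route taken in that reference, and it is correct. The bookkeeping you flag (the degenerate case $\eta\delta\geq 1$ where $\epsilon_-$ is undefined but the lower bound is trivial, and the restriction $C\delta\leq 1/2$ absorbed into the constants) is handled adequately, so nothing is missing.
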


Then simple calculations from Lemmas \ref{calderlem:N}, \ref{calderlem:epsk} show that, for $x\in\mathcal{M}$ and $C(k/n)^{2/m}\leq\delta\leq 1$,
\begin{equation} \label{kconc}
    \mathbb{P}(|N_{\varepsilon_{k}}(x)-k|\geq C\delta k)\leq 6\exp(-c\delta^2k).
\end{equation}
Denote $B_{k}(x):= B(x,\varepsilon_{k}(x))$. Motivated by \eqref{def:epsk}, \eqref{kconc}, we quantify the number of shared $k$-NN neighbors between $x,y\in X$ as
\begin{equation} \label{def:sharednn}
    N(x,y) := \sum_{x_{i}\not= x,y} 1_{\overline{B_{k}(x)}}(x_{i})1_{\overline{B_{k}(y)}}(x_{i}) =  \sum_{x_{i}\not= x,y} \eta\bigg(\frac{|x-x_{i}|}{\varepsilon_{k}(x)}\bigg)\eta\bigg(\frac{|y-x_{i}|}{\varepsilon_{k}(y)}\bigg),
\end{equation}
where $\eta:=1_{[0,1]}(t)$, $t\geq 0$, is the Heaviside step function. Note that, with probability one, the number of shared $k$-nearest neighbors between $x,y\in X$ is $l\leq k$ iff $N(x,y) = l$.\\
We now assign an edge $e_{ij}$ in $\Gamma$ a weight $w_{ij}:=N(x_{i},x_{j})/k$; note that this is the cosine measure mentioned in \eqref{def:cosineweight}. We construct an unnormalized SNN graph Laplacian on $L^2(\mu_{n})$ as follows,
\begin{equation} \label{def:graphLap}
    \mathcal{L}^{snn}u(x_{i}) := \frac{(\alpha n/k)^{1+2/m}}{2^{m+2} n}\sum_{j=1}^{n} w_{ij}(u(x_{i})-u(x_{j}))=\frac{(\alpha n/k)^{1+2/m}}{2^{m+2} n}\sum_{j=1}^{n} \frac{N(x_{i},x_{j})}{k}(u(x_{i})-u(x_{j})).
\end{equation}
A few remarks are in order.

\begin{remark} 
Implicit in the formulation \eqref{def:graphLap} is that $h:=2(k/(\alpha n))^{1/m}$ acts as the graph connectivity parameter. This is suggested by Lemma \ref{calderlem:epsk}, \eqref{def:epsk} and our observation of $N(x,y)$ above, which all conclude that $h$ is an expected SNN neighborhood radius at each datum $x\in X$. \\
When introducing a factor $1/h$, $(\alpha n/k)/2^{m}=1/h^{m}$ becomes a necessary rescaling to reveal a divergent form at the microscopic level, whereas the factor $(\alpha n/k)^{2/m}/2^2=1/h^2$ arises because the Laplacian corresponds to a second derivative. The factor $1/n$ will later play a role in a concentration effect.\\
We will gather information about allowed choice of the parameter $k$ for consistency in \ref{sec:main}; however $k$ should be such that $h\to 0$ sufficiently slow when $n\to\infty$. Then when the number of points in each datum neighborhood encroaches infinity, heuristically, the sum \eqref{def:graphLap} approximates an integral whose normalization approaches $\Delta^{snn}f(x_{i})$. This is the basic principle behind all the graph Laplacian convergence results \cite{hein2007graph, calder2022improved, burago2015graph} and is a well-known principle in the framework of nonparametric regression \cite{gyorfi2002distribution}. 
\end{remark}

\begin{remark} \label{rem:graphcons}
Up to the scaling of $1/h^{m+2}$, the formulation \eqref{def:graphLap} depicts a standard unnormalized graph Laplacian \eqref{def:ugraphLap}. Indeed $h^{m+2}\mathcal{L}^{snn} = D - W$, where 
\begin{equation*}
    d_{i} = \frac{1}{n}\sum_{j}\frac{N(x_{i},x_{j})}{k} \quad\text{ and }\quad w_{ij} = \frac{N(x_{i},x_{j})}{k}.
\end{equation*}
It can be seen from construction that $w_{ij}$ does not only depend on the distance $|x_{i}-x_{j}|$ but also on the locations of $x_{i}, x_{j}$. Therefore, it is not a radial edge weight. This lack of radiality, as we shall see, is a complication in deriving the limiting operator for $\mathcal{L}^{snn}$ as $n\to\infty$.
\end{remark}

We're ready to state our result.

\section{Main result} \label{sec:main}

Given the set-up in \ref{sec:setup}, we show that, with high probability, $\mathcal{L}^{snn}$ in \eqref{def:graphLap} converges pointwise, with a linear rate, to (a multiple of) the following weighted Laplace-Beltrami operator on $\mathcal{M}$:
\begin{equation} \label{def:manLap}
    \Delta^{snn} f(x) := -\frac{1}{2p(x)}div(p^{1-2/m}\nabla f)(x) \quad \text{ for }\quad f\in C^2(\mathcal{M}).
\end{equation}
The notation $div$ stands for the divergence operator on $\mathcal{M}$, and $\nabla$ for the gradient. A precise statement is given in Theorem \ref{thm:consistency} below.

\begin{assumption} \label{assumption} Let $m\geq 2$ and denote $c'_{p,\mathcal{M}}:=\max_{x\in\mathcal{M}} |\nabla p(x)|$. We assume the following for the remainder of this paper:
\begin{enumerate}
    \item $3(k/n)^{1/m}<c_{\mathcal{M}}$, where $c_{\mathcal{M}}$ is the intrinsic constant defined in \eqref{def:epsM},
    \item $(k/n)^{1/m}< C\alpha^{1/m}p_{min}^{1+1/m} (c'_{p,\mathcal{M}})^{-1}$,
    \item $(\log n)^{\frac{m}{m+4}} n^{\frac{4}{m+4}}\ll k\ll n$.
\end{enumerate}
\end{assumption}

Let $\sigma :=\int_{B(0,1)} u_1|^2\eta(|u|)\,du$, the {\it surface tension} of $\eta$, where $B(0,1)$ here is the unit ball in $\mathbb{R}^{m}$ and $u_1$ the first coordinate of $u$. It's known that $\sigma = \frac{\alpha}{m+2}$ \cite{garcia2020error}. For $l=0,1,2,3$ and $f\in C^{l}(\mathcal{M})$, we define
\begin{equation*}
    \|f\|_{C^{l}(\mathcal{M})} := \sum_{j=0}^{l} \|\nabla^{(j)}f\|_{L^{\infty}(\mathcal{M})}.
\end{equation*}

\begin{theorem} \label{thm:consistency} Let $f\in C^3(\mathcal{M})$. Then for $ (k/n)^{1/m}\leq t\leq (k/n)^{-1/m}$, we have
\begin{equation} \label{thmstatement}
    \mathbb{P}\bigg(\max_{1\leq i\leq n}|\mathcal{L}^{snn}f(x_{i})-\frac{1}{m+2}\Delta^{snn}f(x_{i})|\geq Ct\bigg)\leq C_{p}n\exp(-c_{p}t^2 (k/n)^{2/m}k)
\end{equation}
where $C=C(p,\|f\|_{C^3(\mathcal{M})})$ in \eqref{thmstatement} denotes a constant depending on $p$ and $\|f\|_{C^3(\mathcal{M})}$. 
\end{theorem}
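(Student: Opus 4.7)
\noindent\emph{Proof plan.} I would fix $x_i \in X$ and prove $|\mathcal{L}^{snn}f(x_i) - (m+2)^{-1}\Delta^{snn}f(x_i)| \leq C t (k/n)^{1/m}$ on an event matching the probability in \eqref{thmstatement}, then close with a union bound over $i$; the lower bound on $k$ in Assumption \ref{assumption}(3) is exactly what makes this union bound viable, as in the $k$-NN argument of \cite{calder2022improved}. The argument splits into a concentration phase that reduces the random double sum in \eqref{def:graphLap} to a deterministic manifold integral, and a deterministic phase that evaluates that integral via Taylor expansion in normal coordinates.

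\emph{Concentration phase.} Step (a): derandomize the $k$-NN radii. Lemma \ref{calderlem:epsk}, combined with \eqref{kconc} and a union bound over all $n$ sample points, yields uniformly in $j$ the estimate $|\varepsilon_k(x_j) - \bar\varepsilon(x_j)| \leq C t (k/n)^{1/m}\bar\varepsilon(x_j)$, where $\bar\varepsilon(x) := (k/(\alpha p(x) n))^{1/m}$, on an event of probability $1 - Cn\exp(-ct^2 (k/n)^{2/m}k)$. Since $\eta = 1_{[0,1]}$, replacing $\varepsilon_k$ by $\bar\varepsilon$ inside \eqref{def:sharednn} perturbs each indicator by a spherical annulus of relative width $O(t(k/n)^{1/m})$, whose $\mu$-mass is linear in that width by the $C^2$ regularity of $p$. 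Step (b): conditional on the derandomized radii, $N(x_i,x_j)$ is a sum of $n-2$ independent Bernoullis, and Bernstein's inequality with a union bound over pairs gives $|N(x_i,x_j)/k - q_{ij}/k| \leq Ct\,q_{ij}/k$ where $q_{ij} := n\,\mu\bigl(B(x_i,\bar\varepsilon(x_i))\cap B(x_j,\bar\varepsilon(x_j))\bigr)$. Step (c): a third Bernstein step replaces the empirical mean $n^{-1}\sum_j (q_{ij}/k)(f(x_i)-f(x_j))$ by the corresponding continuum integral over $y \in \mathcal{M}$, at the same rate, using boundedness from \eqref{densitybds}.

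\emph{Deterministic phase.} After the three concentration steps, $\mathcal{L}^{snn}f(x_i)$ agrees to order $t(k/n)^{1/m}$ with
\[
\mathcal{I}(x_i) := \frac{(\alpha n/k)^{1+2/m}\, n}{2^{m+2}\, k}\int_{\mathcal{M}}\!\!\int_{\mathcal{M}} \eta\!\left(\tfrac{|x_i-z|}{\bar\varepsilon(x_i)}\right)\eta\!\left(\tfrac{|y-z|}{\bar\varepsilon(y)}\right)\bigl(f(x_i)-f(y)\bigr)p(z)p(y)\,d\mathcal{V}(z)\,d\mathcal{V}(y).
\]
I would pass to normal coordinates at $x_i$ and change variables $y = \exp_{x_i}(\bar\varepsilon(x_i) u)$, $z=\exp_{x_i}(\bar\varepsilon(x_i) v)$, Taylor expand $p$ to linear order, the volume distortion to quadratic order, and $f$ to quadratic order, and set $\lambda(u) := \bar\varepsilon(y)/\bar\varepsilon(x_i) = 1 - \bar\varepsilon(x_i)(m p(x_i))^{-1}\nabla p(x_i)\cdot u + O((k/n)^{2/m})$. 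The leading radial kernel $\eta(|v|)\eta(|u-v|)$, acting against the quadratic Taylor term of $f$, produces the Laplace-Beltrami piece of $(m+2)^{-1}\Delta^{snn}f(x_i)$ via the convolution identity $\int\!\int\eta(|v|)\eta(|u-v|)|u|^2du\,dv = 2m\alpha^2/(m+2)$ (equivalently $\int |v|^2 \eta(|v|)dv = m\sigma$). The first-order correction from $\lambda(u)\neq 1$ breaks radial symmetry, and combined with the linear Taylor term of $f$ and the $\nabla p$ corrections to the density product $p(z)p(y)$, it supplies exactly the drift contribution with coefficient $(1-2/m)/(2(m+2)p(x_i)^{1+2/m})$ needed to complete the divergence $\mathrm{div}(p^{1-2/m}\nabla f)$ in \eqref{def:manLap}. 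Curvature and volume-distortion corrections contribute an $O((k/n)^{1/m})$ remainder.

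\emph{Main obstacle.} The crux, as flagged in Remark \ref{rem:graphcons}, is the non-radiality of the SNN kernel $N(x,y)/k$: because $\bar\varepsilon(y)$ depends on $p(y)$, the usual odd-moment-vanishes-by-symmetry shortcut fails, and the linear Taylor term of $f$ produces a genuine drift that must be tracked and exactly matched to the $\nabla p$ expansion of $\bar\varepsilon(y)$. This must be done while $\eta$ is discontinuous; the rescue is that the smoothed intersection volume $V(x,y) := \mu\bigl(B(x,\bar\varepsilon(x))\cap B(y,\bar\varepsilon(y))\bigr)$ is a smooth function of $(x,y)$, so one linearizes $V$ rather than the integrand itself. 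Matching the precise exponent $1-2/m$ requires combining the $\nabla p$ corrections to $\bar\varepsilon(y)$ with those to $p(z)p(y)$ at the same order, and I expect the bookkeeping here to be the most delicate step of the proof.
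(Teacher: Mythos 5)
Your proposal follows essentially the same route as the paper: a chain of concentration steps (the paper's evolution $\mathcal{L}^{snn}\to\mathcal{L}^1\to\mathcal{L}^{\sharp}\to\mathcal{L}^2$) reducing the graph Laplacian to a deterministic double integral against the ball-intersection kernel, followed by a Taylor expansion in normal coordinates in which the density-dependent radii supply the $(1-2/m)$ drift and a union bound over $i$ closes the argument; your reordering (derandomizing the radii before concentrating the inner sum, rather than after) is an inessential permutation, and your convolution identity and drift coefficient agree with the paper's computation. The one point your outline leaves implicit is that ``linearizing $V$'' must include verifying that the first-order variation of the intersection volume vanishes at $y=x$ (the paper's $\nabla\tilde{\omega}_0(0)=0$, established via a Reynolds transport argument in Appendix \ref{sec:frakw}), and that the swap $\varepsilon_k\to\bar\varepsilon$ inside the indicators requires a further Chernoff bound on the empirical count of the resulting annulus (the paper's \eqref{chern}), not just a bound on its $\mu$-mass.
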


\begin{remark}
Note that the second point of Assumption \ref{assumption} is purely technical. One can combine the first and the second points; we separate them since the second involves the knowledge of the global maximum of $|\nabla\log p|$. See also Remark \ref{rem:secondpt} below. The third point in Assumption \ref{assumption} ensures that the probability bound in \eqref{thmstatement} is strictly less than one, while the role of the first will be clear in \ref{sec:Riegeo}.   
\end{remark}

\subsection{Ramifications of Theorem \ref{thm:consistency}} \label{sec:rami}

Note from construction that SNN graphs adjust to data density in a way that is different from, say, the $\epsilon$- and $k$-NN- graphs \cite{garcia2019variational, calder2022improved, hein2007graph, garcia2020error}. Yet Theorem \ref{thm:consistency} shows that the SNN graph Laplacian reaches the same continuum operator as the $k$-NN graph Laplacian, which is \cite{calder2022improved}
\begin{equation} \label{snnknneq}
    \Delta^{snn} = -\frac{1}{2p}div(p^{1-2/m}\nabla) := \Delta^{knn}.
\end{equation}
Hence an application of SNN graph doesn't produce any more manifold spectral information than what could be gained from a $k$-NN graph. \\
Recall from \cite{hein2007graph} the following definition of the $s$-th weighted Laplace-Beltrami operator
\begin{equation} \label{def:weightLB}
    \Delta_{s}:=-\frac{1}{2p^{s}}div(p^{s}\nabla) = -\frac{1}{2}\bigg(-\Delta + s\nabla\log p\cdot\nabla\bigg).
\end{equation}
Here $s\in\mathbb{R}$, and $\Delta:=\nabla\cdot\nabla$ is the unweighted Laplace-Beltrami on $\mathcal{M}$. The presence of $p^{s}$ in \eqref{def:weightLB} reveals how the data-dependent nature of the edge weights in a given graph construction influences the limiting differential operator. Since a Laplacian operator generates a diffusion process, the term $s\nabla\log p\cdot\nabla$ can be seen as inducing an anisotropic term, which directs the diffusion toward ($s>0$) or away from ($s<0$) increasing density. An interesting finding in \cite{hein2007graph} is that, for a given graph based on primary similarity, different types of graph Laplacians converge to different scaled versions of $\Delta_{s}$. In particular, for an unnormalized graph Laplacian, this limit is 
\begin{equation} \label{rule}
    p^{1-2\lambda}\Delta_{s} \quad\text{ where }\quad s=:2(1-\lambda).
\end{equation}
Since the limiting operator for the unnormalized $\epsilon$-graph Laplacian is $\Delta^{\epsilon}:=-\frac{1}{2p}div(p^2\nabla)$ \cite{hein2007graph, garcia2020error, calder2022improved}, we see that $1-2\lambda=1$ and $s=2$ in this case. Although the $k$-NN graph as well as the SNN graph was not among the ones considered in \cite{hein2007graph}, referring \eqref{rule} back to \eqref{snnknneq}, we find $1-2\lambda=-2/m$ and $s=1-2/m$ for both cases. This suggests that \eqref{rule} might hold for more than just non-ranking, primary proximity based graphs. \\
The fact that $s=1-2/m\geq 0$ in \eqref{snnknneq} for all $m\geq 2$ is desirable in clustering and classification (see \ref{sec:spectrum} below) where one wants the diffusion process mainly along the regions of the same density level. However, due to the factor $p^{1-2\lambda}=p^{-2/m}$ in the limit, the unnormalized SNN (and $k$-NN) graph Laplacian is predicted to be unfit for applications such as label propagation, since the propagation will be slow in regions of high densities \cite{hein2007graph}. 

\subsubsection{Spectral information} \label{sec:spectrum}

Since we intend to use this exposition as a starting point for future work on spectral convergence of SNN graph Laplacians, we briefly discuss the spectra of $\mathcal{L}^{snn}$ and $\Delta^{snn}$ here.\\
In what follows, $H^1(\mu)$ denotes the Sobolev space of functions $f$ with a weak first derivative $\nabla f$, and $H^2(\mu)$, with a weak second derivative $\nabla^2 f$, all in $L^2(\mu)$ \cite{evans2010partial}. Then $\|f\|_{H^1(\mu)}^2:=\|f\|_{L^2(\mu)}^2 + \|\nabla f\|_{L^2(\mu)}^2$. The definition for $\|f\|_{H^2(\mu)}^2$ is similar. Induced by $\Delta^{snn}$ is a manifold Dirichlet energy functional on $\mathcal{M}$
\begin{equation} \label{def:manE}
    \mathcal{E}^{\mathcal{M}}(f) := \int_{\mathcal{M}} |\nabla f(x)|^2p(x)^{1-2/m}\,d\mathcal{V}(x) \quad \text{ for }\quad f\in H^1(\mu)
\end{equation}
and an associated Dirichlet bilinear form,
\begin{equation*} 
    B(f,g) := \int_{\mathcal{M}} \nabla f(x)\cdot\nabla g(x)p(x)^{1-2/m}\,d\mathcal{V}(x), \quad \text{ for }\quad f,g\in H^1(\mu).
\end{equation*}
Since $p$ is bounded \eqref{densitybds}, $B(f,g)$ is nonvanishing and continuous: $|B(f,g)|\leq C\|f\|_{H^1(\mu)}\|g\|_{H^1(\mu)}$. The relationship between $\Delta^{snn}$ and $\mathcal{E}^{\mathcal{M}}$ can be seen as follows. For $f\in H^2(\mu), g\in H^1(\mu)$, by definition \eqref{def:L^2inn} and integration by parts
\begin{equation} \label{endelta}
    2\langle \Delta^{snn} f,g\rangle_{\mu} = 2\int_{\mathcal{M}} \Delta^{snn} f(x)g(x)p(x)\,d\mathcal{V}(x) = \int_{\mathcal{M}} \nabla f(x)\cdot\nabla g(x)p(x)^{1-2/m}\,d\mathcal{V}(x) = B(f,g).
\end{equation}
Hence, $2\langle \Delta^{snn} f,f\rangle_{\mu}=B(f,f)=\mathcal{E}^{\mathcal{M}}(f)\geq 0$, and so $\Delta^{snn}$ is a positive semidefinite operator on $H^2(\mu)$; as such, it has a pure point spectrum whose eigenvalues, including (finite) multiplicity, can be listed in an increasing order
\begin{equation*}
    0=\lambda^{\mathcal{M}}_0<\lambda^{\mathcal{M}}_1\leq\lambda^{\mathcal{M}}_2\leq\cdots\leq\lambda^{\mathcal{M}}_{l}\leq\cdots .
\end{equation*}
By \eqref{def:manE} and the Rayleigh-Ritz variational principle 
\begin{equation} \label{lambda1}
    \lambda^{\mathcal{M}}_1 = \inf_{f\in C^{\infty}(\mathcal{M})} \bigg\{\frac{\int_{\mathcal{M}} |\nabla f(x)|^2 p(x)^{1+2/m}\,d\mathcal{V}(x)}{\int_{\mathcal{M}} |f(x)|^2 p(x)^{1+2/m}\,d\mathcal{V}(x)}\bigg|\int_{\mathcal{M}} f(x)p(x)^{1+2/m}\,d\mathcal{V}(x)=0\bigg\}.
\end{equation}
The restriction to smooth functions in \eqref{lambda1} (also in Theorem \ref{thm:consistency} and \eqref{endelta}) is not a matter since the associated continuum eigenfunctions are smooth \cite{anderson2004boundary}. Moreover, it can be seen from \eqref{lambda1} that the first (non-constant) eigenfunctions must change their signs, and because in the energy $\mathcal{E}^{\mathcal{M}}(f)$, $|\nabla f|^2$ is weighted against a positive power of $p$, they must only change their signs in low density regions. This gives an advantage in clustering with a few labeled points where one assumes that the classifier remains relatively constant in high density \cite{bousquet2003measure}.\\

Similarly to $\Delta^{snn}$, $\mathcal{L}^{snn}$ is a positive semidefinite operator on $L^2(\mu_{n})$; indeed
\begin{equation*}
    \langle \mathcal{L}^{snn} u, u\rangle_{\mu_{n}} = \frac{(\alpha n/k)^{1+2/m}}{2^{m+2}n^2}\sum_{i,j}\frac{N(x_{i},x_{j})}{k}(u(x_{i})-u(x_{j}))^2 \geq 0.
\end{equation*}
Its spectrum can be listed as
\begin{equation}
    0=\lambda^{\Gamma}_1<\lambda^{\Gamma}_2\leq\cdots\leq\lambda^{\Gamma}_{n}
\end{equation}
assuming that $\Gamma$ is connected \cite{spielman2019spectral}. We anticipate the rate at which $\lambda^{\Gamma}_{l}\to\lambda^{\mathcal{M}}_{l}$ to be quantified in a future work.

\subsection{Outline of the proof for Theorem \ref{thm:consistency}}

We prove the convergence $\mathcal{L}^{snn}\to\Delta^{snn}$ depicted in \eqref{thmstatement} by passing it through an evolution:
\begin{equation} \label{evolution}
    \mathcal{L}^{snn}\to \mathcal{L}^1 \to \mathcal{L}^2 \to \Delta^{snn}.
\end{equation}
The precise definitions of $\mathcal{L}^1$, $\mathcal{L}^2$ will be given in \ref{sec:mainproof}. We treat these two operators and $\Delta^{snn}$ as operators on $L^2(\mu)$ as well as on $L^2(\mu_{n})\cap C(\mathcal{M})$, i.e., we restrict them to the SNN graph $\Gamma$. Although we can treat the graph Laplacian $\mathcal{L}^{snn}$ the same way, we mainly treat it as an operator on $L^2(\mu_{n})$. To progress from an operator on $L^2(\mu_{n})$ to that on $L^2(\mu)$, we will need a key concentration ingredient, given in \ref{sec:conmeas}. Additionally, we will need tools from differential geometry, which are summarized below. 

\subsubsection{Local Riemannian geometry} \label{sec:Riegeo}

The results here were already presented in \cite{calder2022improved, garcia2020error, burago2015graph}; more details can also be found in \cite{do1992riemannian}.\\
Let $\exp_{x}: T_{x}\mathcal{M}\to\mathcal{M}$ be the Riemannian exponential map at $x\in\mathcal{M}$ and $J_{x}(v)$ denote the Jacobian of $\exp_{x}$ at $v\in B(0,r)\subset T_{x}\mathcal{M}$. The Rauch Comparison Theorem \cite{do1992riemannian} states that the relative distortion of metric by $\exp_{x}$ at $z\in B(0,s)\subset T_{x}\mathcal{M}$ is bounded by $O(K|z|^2)$. Hence
\begin{equation} \label{jacobian}
    (1+CK|v|^2)^{-1}\leq J_{x}(v)\leq 1+CK|v|^2,
\end{equation}
from which it follows that (see \cite{calder2022improved})
\begin{equation} \label{volexch}
    |\mathcal{V}(\mathcal{B}(x,r))-\alpha r^{m}|\leq CKr^{m+2}.
\end{equation}
In addition, if $|x-y|\leq R/2$ then by \cite[Proposition 2]{garcia2020error},
\begin{equation} \label{distexch}
    |x-y|\leq d(x,y)\leq |x-y|+\frac{8}{R^2}|x-y|^3,
\end{equation}
where $d$ denotes the geodesic distance on $\mathcal{M}$. Furthermore, when $0<s<\min\{i_0,K^{-1/2}\}$,
\begin{equation} \label{expmap}
    \exp_{x}: T_{x}\mathcal{M} \supset B(0,s)\to \mathcal{B}(x,s)\subset\mathcal{M}
\end{equation}
is a diffeomorphism and a bi-Lipschitz bijection (\cite[Proposition 1]{garcia2020error}). 

\begin{remark} \label{rem:tilde}
When \eqref{expmap} denotes a diffeormorphism, one can write, say, an integral on $\mathcal{M}$, such as
\begin{equation} \label{int}
    \int_{\mathcal{M}} \eta\bigg(\frac{d(x,y)}{s}\bigg)(f(x)-f(y))\,p(y)d\mathcal{V}(y)
\end{equation}
in normal coordinates. We give a demonstration here. We write $\tilde{f}(\cdot)$ to mean $f(\exp_{x}(\cdot))$, a function on $\mathbb{R}^{m}$ when $f$ is a function on $\mathcal{M}$. Then \eqref{int} becomes
\begin{equation*}
    \int_{\mathcal{M}} \eta\bigg(\frac{d(x,y)}{s}\bigg)(f(x)-f(y))\,p(y)d\mathcal{V}(y)
    = \int_{B(0,s)\subset T_{x}\mathcal{M}} \eta\bigg(\frac{|u|}{s}\bigg)(\tilde{f}(0) - \tilde{f}(u))\tilde{p}(u)J_{x}(u)\,du,
\end{equation*}
where $\exp_{x}(u)=y$, $\exp_{x}(0)=x$ and $\eta$ is as in \eqref{def:sharednn}. It should be noted that Assumption \ref{assumption} guarantees that \eqref{expmap} is a diffeomorphism with $s\leq c(k/n)^{1/m}$, whenever $c$ is small enough.
\end{remark}

\begin{remark} \label{rem:exchnorm} If $f\in C^{l}(\mathcal{B}(x,r))$, then $\|f\|_{C^{l}(\mathcal{B}(x,r))}$ and $\|\tilde{f}\|_{C^{l}(B(0,r))}$ are equivalent. Indeed, following \cite{hebey1996sobolev}, one can write the covariant derivatives of $f$ in normal coordinates around $x$, and use standard expansions for the Christoffel symbols and metric tensor in these coordinates, to conclude that
\begin{equation*}
    (1-Cr)\|\tilde{f}\|_{C^{l}(B(0,r))}\leq \|f\|_{C^{l}(\mathcal{B}(x,r))}\leq (1+Cr)\|\tilde{f}\|_{C^{l}(B(0,r))}.
\end{equation*}
\end{remark}

\subsubsection{A crucial measure concentration result} \label{sec:conmeas}

\begin{lemma} \label{calderlem:conc} \cite[Lemma 3.1 and Proof]{calder2022improved}
Let $\psi:\mathcal{M}\to\mathbb{R}$ be bounded and Borel measurable. For $x\in\mathcal{M}$, define
\begin{equation*} 
    \Psi := \sum_{i:|x_{i}-x|\leq\epsilon} \psi(x_{i}).
\end{equation*}
Then for any $\epsilon^2\leq\delta\leq 1$, we have that
\begin{equation} \label{calderlem1conc}
    |a-b| \leq Cp_{max}\|\psi\|_{\infty} n\epsilon^{m}\delta
\end{equation}
and that
\begin{equation} \label{calderlem2conc}
    \mathbb{P}(|\Psi-a|\geq Cp_{max}\|\psi\|_{\infty}\delta n\epsilon^{m}) \leq 2\exp(-cp_{max}\delta^2 n\epsilon^{m}).
\end{equation}
where $\|\psi\|_{\infty}:=\|\psi\|_{L^{\infty}(\mathcal{B}(x,2\epsilon))}$ and
\begin{equation*}
    a := n\int_{\mathcal{B}(x,\epsilon)} \psi(z)p(z)\,d\mathcal{V}(z) \quad\text{ and }\quad
    b := n\int_{B(x,\epsilon)\cap\mathcal{M}} \psi(z)p(z)\,d\mathcal{V}(z).
\end{equation*}
\end{lemma}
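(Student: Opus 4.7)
The plan is to treat the two assertions separately: the first is a deterministic volumetric comparison between geodesic and Euclidean neighborhoods on $\mathcal{M}$, while the second is a Bernstein-type concentration for the i.i.d. sum $\Psi$, whose expectation equals $b$ (not $a$, since the defining indicator $1_{|x_i-x|\leq\epsilon}$ is Euclidean). The first step is then used to bridge the $b\to a$ gap in the concentration statement.

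For the bound $|a-b|$, I would invoke the chord--geodesic comparison \eqref{distexch}. Since $|z-x|\leq d(z,x)$ always holds on a submanifold, one has $\mathcal{B}(x,\epsilon)\subset B(x,\epsilon)\cap\mathcal{M}$. Conversely, if $|z-x|\leq\epsilon$ and $\epsilon\leq R/2$, then \eqref{distexch} yields $d(z,x)\leq\epsilon(1+8\epsilon^2/R^2)$, so $B(x,\epsilon)\cap\mathcal{M}\subset\mathcal{B}(x,\epsilon(1+C\epsilon^2))$. The symmetric difference thus lies in the geodesic annulus $\mathcal{B}(x,\epsilon(1+C\epsilon^2))\setminus\mathcal{B}(x,\epsilon)$, whose $\mathcal{V}$-volume by \eqref{volexch} is at most $\alpha\epsilon^m((1+C\epsilon^2)^m-1)+O(\epsilon^{m+2})\leq C\epsilon^{m+2}$. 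Bounding $|\psi p|$ in absolute value by $\|\psi\|_\infty p_{max}$ then gives $|a-b|\leq Cp_{max}\|\psi\|_\infty n\epsilon^{m+2}$, and the hypothesis $\delta\geq\epsilon^2$ promotes this to the advertised $Cp_{max}\|\psi\|_\infty n\epsilon^m\delta$.

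For the concentration, write $\Psi=\sum_{i=1}^n Y_i$ with $Y_i:=\psi(x_i)1_{|x_i-x|\leq\epsilon}$. The $Y_i$ are i.i.d., satisfy $|Y_i|\leq M:=\|\psi\|_\infty$ (with $\|\psi\|_\infty$ taken on $\mathcal{B}(x,2\epsilon)$, which contains $B(x,\epsilon)\cap\mathcal{M}$ by \eqref{distexch} for small $\epsilon$), have $\mathbb{E}[Y_i]=b/n$, and obey
\[
\mathrm{Var}(Y_i)\leq\mathbb{E}[Y_i^2]\leq M^2 p_{max}\,\mathcal{V}(B(x,\epsilon)\cap\mathcal{M})\leq Cp_{max}M^2\epsilon^m.
\]
Bernstein's inequality then gives, for any $t>0$,
\[
\mathbb{P}(|\Psi-b|\geq t)\leq 2\exp\!\left(-\frac{t^2/2}{nCp_{max}M^2\epsilon^m+Mt/3}\right).
\]
Choosing $t=Cp_{max}Mn\epsilon^m\delta$ and using $\delta\leq 1$, the variance term $nCp_{max}M^2\epsilon^m$ dominates (up to constants) the range term $Mt/3\asymp p_{max}M^2n\epsilon^m\delta$, so the exponent collapses to $-cp_{max}\delta^2 n\epsilon^m$. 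Finally, the triangle inequality $|\Psi-a|\leq|\Psi-b|+|a-b|$ absorbs the deterministic gap from step one into the stated threshold (after enlarging $C$ if necessary).

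The main obstacle is purely bookkeeping: in the Bernstein step I must pick the constant $C$ in the threshold large enough that, over the whole range $\epsilon^2\leq\delta\leq 1$, the range term $Mt/3$ in the denominator stays subordinate to the variance term $nCp_{max}M^2\epsilon^m$, so that the factor $p_{max}$ cleanly appears in the exponent. A secondary point worth verifying is that the localization of $\|\psi\|_\infty$ to $\mathcal{B}(x,2\epsilon)$ is genuinely admissible in the Bernstein bound, which reduces to the observation that every $x_i$ contributing a nonzero $Y_i$ lies within Euclidean distance $\epsilon$ of $x$, hence by \eqref{distexch} within geodesic distance $\leq 2\epsilon$ once $\epsilon$ is small enough (which is guaranteed by Assumption \ref{assumption}).
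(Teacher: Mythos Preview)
Your proposal is correct, and in fact the paper does not supply its own proof of this lemma: it is quoted verbatim from \cite[Lemma~3.1 and its proof]{calder2022improved} and used as a black box. The argument you sketch---a deterministic geodesic/Euclidean annulus estimate via \eqref{distexch} and \eqref{volexch} for $|a-b|$, followed by Bernstein's inequality for the i.i.d.\ sum $\Psi$ with the variance bound $\mathrm{Var}(Y_i)\leq Cp_{max}\|\psi\|_\infty^2\epsilon^m$---is exactly the standard route and is essentially how the result is obtained in \cite{calder2022improved}.
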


\section{Main proof} \label{sec:mainproof}

Define, for $x,y\in\mathcal{M}$, 
\begin{align*}
    w_{k}(x,y) := \int_{\mathcal{M}} \eta\bigg(\frac{d(x,z)}{\varepsilon_{k}(x)}\bigg)\eta\bigg(\frac{d(y,z)}{\varepsilon_{k}(y)}\bigg) p(z)\,d\mathcal{V}(z).
\end{align*}
We claim that $N(x,y)\approx nw_{k}(x,y)$ with high probability.

\begin{lemma} \label{lem:weight1}
Let $x,y\in\mathcal{M}$. For every $\varepsilon_{k}(x,y)^2\leq\delta\leq 1$, 
\begin{equation*}
    \mathbb{P}(|N(x,y)-nw_{k}(x,y)|\geq Cp_{max}\delta n\varepsilon_{k}(x,y)^{m})\leq 2\exp(-cp_{max}\delta^2 n\varepsilon_{k}(x,y)^{m}),
\end{equation*}
where $\varepsilon_{k}(x,y):=\varepsilon_{k}(x)+\varepsilon_{k}(y)$.
\end{lemma}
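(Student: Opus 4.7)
The plan is to split the difference $N(x,y)-nw_k(x,y)$ into two pieces: an empirical-versus-measure piece controlled by the concentration Lemma \ref{calderlem:conc}, and a Euclidean-versus-geodesic piece controlled by the local geometry in \eqref{distexch}--\eqref{volexch}. Since both $N(x,y)$ and $w_k(x,y)$ are symmetric in $x,y$, I first assume without loss of generality that $\varepsilon_k(x)\geq \varepsilon_k(y)$, so that $\varepsilon_k(x)\leq \varepsilon_k(x,y)\leq 2\varepsilon_k(x)$ and it suffices to estimate in terms of $\varepsilon_k(x)^m$.

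For the concentration step, set $\epsilon:=\varepsilon_k(x)$ and $\psi(z):=\eta(|y-z|/\varepsilon_k(y))$, which is Borel-measurable and bounded by $1$. Up to the $O(1)$ correction from excluding $x_i=x,y$, the quantity $N(x,y)$ equals $\Psi=\sum_{i:\,|x_i-x|\leq \epsilon}\psi(x_i)$, so Lemma \ref{calderlem:conc} yields
$$|N(x,y)-a|\leq Cp_{max}\,\delta\, n\,\varepsilon_k(x)^m$$
with probability at least $1-2\exp(-cp_{max}\delta^2 n\varepsilon_k(x)^m)$, where
$$a:=n\int_{\mathcal{B}(x,\varepsilon_k(x))}\eta\bigl(|y-z|/\varepsilon_k(y)\bigr)p(z)\,d\mathcal{V}(z).$$
Note that $a$ involves a geodesic ball around $x$ but still a Euclidean ball around $y$.

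For the geometric comparison step, $a$ and $nw_k(x,y)$ differ only in that $a$ integrates the inner indicator over $\overline{B(y,\varepsilon_k(y))}\cap\mathcal{M}$ while $nw_k(x,y)$ integrates it over $\overline{\mathcal{B}(y,\varepsilon_k(y))}$. Using $|z-y|\leq d(z,y)$ together with \eqref{distexch}, one gets $\overline{\mathcal{B}(y,\varepsilon_k(y))}\subset \overline{B(y,\varepsilon_k(y))}\cap\mathcal{M}$ and the reverse inclusion up to a geodesic annulus of outer radius $\varepsilon_k(y)(1+O(\varepsilon_k(y)^2))$. By the volume estimate \eqref{volexch} this annulus has $p$-weighted $\mathcal{V}$-measure of order $\varepsilon_k(y)^{m+2}$, so
$$|a-nw_k(x,y)|\leq Cp_{max}n\,\varepsilon_k(y)^{m+2}\leq Cp_{max}\,\delta\, n\,\varepsilon_k(x,y)^m,$$
where the last inequality uses $\delta\geq \varepsilon_k(x,y)^2\geq \varepsilon_k(y)^2$ and $\varepsilon_k(x,y)\geq \varepsilon_k(y)$. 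Combining the two steps and absorbing the factor $\varepsilon_k(x)^m\asymp \varepsilon_k(x,y)^m$ into the constants $C,c$ delivers the statement.

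The main technical obstacle I anticipate is that $\varepsilon_k(x)$ and $\varepsilon_k(y)$ are themselves functions of the whole sample, while Lemma \ref{calderlem:conc} is phrased with a deterministic radius. I would handle this by pairing the above argument with Lemma \ref{calderlem:epsk}, which gives $\varepsilon_k(x)^m\asymp_p k/n$ on an event of exponentially small failure probability; on that event both radii lie in a deterministic window $[c_1(k/n)^{1/m},c_2(k/n)^{1/m}]$, and I can invoke Lemma \ref{calderlem:conc} at the worst-case deterministic radius (or run a short $\varepsilon$-net argument over radii), which only costs constants that are absorbed into $C,c$ in the final bound.
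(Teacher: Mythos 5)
Your proposal is correct and follows essentially the same route as the paper: the paper also applies Lemma \ref{calderlem:conc} with the indicator of the Euclidean ball $\overline{B_{k}(y)}$ as the test function $\psi$ summed over the sample points in $\overline{B_{k}(x)}$, and then replaces the Euclidean ball about $y$ by the geodesic ball $\mathcal{B}_{k}(y)$ --- the only cosmetic difference being that the paper obtains your annulus estimate by quoting the deterministic bound \eqref{calderlem1conc} on $|a-b|$ rather than re-deriving it from \eqref{distexch} and \eqref{volexch}. Your closing caveat about $\varepsilon_{k}(x),\varepsilon_{k}(y)$ being sample-dependent radii is well taken (the paper applies Lemma \ref{calderlem:conc} at these random radii without comment), and the fix you sketch via Lemma \ref{calderlem:epsk} is the standard one.
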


The proof of Lemma \ref{lem:weight1} is a direct consequence of Lemma \ref{calderlem:conc} and is given in the Appendix \ref{sec:weight1}. This inspires us to define
\begin{equation} \label{def:L1}
    \mathcal{L}^1 u(x) := \frac{(\alpha n/k)^{1+2/m}}{2^{m+2} k}\sum_{i=1}^{n} w_{k}(x,x_{i})(u(x)-u(x_{i})).
\end{equation}
Next, for every $x\in\mathcal{M}$, we let (see also \cite{calder2022improved})
\begin{equation} \label{def:eps}
    k/n =: \alpha p(x)\varepsilon(x)^{m}. 
\end{equation}
Since $p\in C^2(\mathcal{M})$ and is bounded away from zero, $\varepsilon\in C^2(\mathcal{M})$ and therefore is a continuous version of $\varepsilon_{k}$. Let $\varepsilon^{\star}:=\max\{\varepsilon(x):x\in\mathcal{M}\}$. Then $\varepsilon(x),\varepsilon^{\star}\asymp_{p} (k/n)^{1/m}$; in particular, it follows from Assumption \ref{assumption} that \eqref{expmap} holds for $s=3\varepsilon^{\star}$. \\
As an operator on $L^2(\mu_{n})$, $\mathcal{L}^1$ predicts the average behavior of $\mathcal{L}^{snn}$; this is the content of the following lemma.

\begin{lemma} \label{lem:firstevol}
Let $u\in L^2(\mu_{n})$ such that $u=f|_{X}$ for some $f\in C^{0,1}(\mathcal{M})$. Let $x\in X$. Then
\begin{equation*}
    \mathbb{P}(|\mathcal{L}^{snn}u(x)-\mathcal{L}^1u(x)|\geq C_{p} [f]_{1;\mathcal{B}(x,2\varepsilon^{\star})}t)\leq Cn\exp(-c_{p} k (k/n)^{2/m}t^2)
\end{equation*}
whenever $(k/n)^{1/m}\leq t\leq (k/n)^{-1/m}$.
\end{lemma}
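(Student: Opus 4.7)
The plan is to decompose the difference as
\[
\mathcal{L}^{snn}u(x)-\mathcal{L}^1u(x) = \frac{(\alpha n/k)^{1+2/m}}{2^{m+2}nk}\sum_{j=1}^n \bigl[N(x,x_j)-nw_k(x,x_j)\bigr]\bigl(u(x)-u(x_j)\bigr),
\]
apply Lemma \ref{lem:weight1} to each summand, and close with a union bound over $j=1,\ldots,n$. The sum is effectively finite: both $N(x,x_j)$ and $w_k(x,x_j)$ vanish whenever the balls $\mathcal{B}(x,\varepsilon_k(x))$ and $\mathcal{B}(x_j,\varepsilon_k(x_j))$ fail to overlap, which forces $|x-x_j|\leq \varepsilon_k(x)+\varepsilon_k(x_j)\leq 2\varepsilon^\star$ on a typical event.

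First I would pass to a high-probability good event $\mathcal{G}$ on which (i) $\varepsilon_k(x_i)\asymp_p(k/n)^{1/m}$ for every index $i$ and (ii) $N_{2\varepsilon^\star}(x)\leq C_p k$. Both follow from Lemmas \ref{calderlem:epsk} and \ref{calderlem:N} (the first requires a union bound over $i=1,\ldots,n$, but the resulting failure probability is easily dominated by the target tail in view of Assumption \ref{assumption}). On $\mathcal{G}$, only $O(k)$ indices $j$ contribute to the sum, every contributing $x_j$ lies in $\mathcal{B}(x,2\varepsilon^\star)$, and the Lipschitz bound together with \eqref{distexch} gives $|u(x)-u(x_j)|\leq C_p[f]_{1;\mathcal{B}(x,2\varepsilon^\star)}(k/n)^{1/m}$.

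Next I would invoke Lemma \ref{lem:weight1} with the choice $\delta = t(k/n)^{1/m}$. On $\mathcal{G}$ we have $\varepsilon_k(x,x_j)^2\asymp_p(k/n)^{2/m}$, so the admissibility window $\varepsilon_k(x,x_j)^2\leq\delta\leq 1$ becomes exactly $(k/n)^{1/m}\leq t\leq (k/n)^{-1/m}$, matching the hypothesis. This yields $|N(x,x_j)-nw_k(x,x_j)|\leq C_p\delta k$ with failure probability at most $2\exp(-c_p\delta^2 k)=2\exp(-c_p t^2(k/n)^{2/m}k)$. Multiplying the prefactor $(\alpha n/k)^{1+2/m}/(2^{m+2}nk)$ by the count $O(k)$, by the pairwise bound $C_p\delta k$, and by the displacement factor $C_p[f]_{1;\mathcal{B}(x,2\varepsilon^\star)}(k/n)^{1/m}$, the powers telescope as $(n/k)^{1+2/m}\cdot k^2\cdot (k/n)^{1/m}/(nk)=(n/k)^{1/m}$, giving
\[
|\mathcal{L}^{snn}u(x)-\mathcal{L}^1 u(x)|\leq C_p[f]_{1;\mathcal{B}(x,2\varepsilon^\star)}\,\delta(n/k)^{1/m} = C_p[f]_{1;\mathcal{B}(x,2\varepsilon^\star)}\,t.
\]
A union bound over $j=1,\ldots,n$ produces the claimed $Cn\exp(-c_p t^2(k/n)^{2/m}k)$ tail.

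The main obstacle is the dependence structure: Lemma \ref{lem:weight1} is a pairwise statement, but the radii $\varepsilon_k(x_j)$ baked into $N(x,x_j)$ and $w_k(x,x_j)$ are all functions of the same sample, and the shared-neighbor counts involve products rather than sums of indicator functionals of $\mu_n$. The cleanest resolution is the conditioning on $\mathcal{G}$ above, which pins all $\varepsilon_k(x_j)$ to the scale $(k/n)^{1/m}$ so that Lemma \ref{lem:weight1} can be applied pair-by-pair with a uniform choice of $\delta$. The remaining delicate point is the exponent arithmetic at the boundary cases $t\approx(k/n)^{\pm 1/m}$, where the admissibility constraint from Lemma \ref{lem:weight1} is saturated and must be checked carefully.
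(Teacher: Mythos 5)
Your proposal is correct and follows essentially the same route as the paper: restrict to a high-probability event on which all $\varepsilon_k(x_i)\asymp_p(k/n)^{1/m}$ and only $O(k)$ indices contribute, apply Lemma \ref{lem:weight1} pairwise with $\delta\asymp t(k/n)^{1/m}$, and close with a union bound; the exponent bookkeeping $(n/k)^{1+2/m}\cdot k^2\cdot(k/n)^{1/m}/(nk)=(n/k)^{1/m}$ matches the paper's display \eqref{LsL1diff} exactly. Your explicit remarks on the dependence structure and the saturation of the admissibility window at $t\approx(k/n)^{\pm1/m}$ are points the paper leaves implicit, but they do not change the argument.
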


Here, $C^{0,1}(\mathcal{M})$ denotes the space of $1$-H\"older continuous \cite{folland1999real} functions on $\mathcal{M}$, and 
\begin{equation*}
    [f]_{1;\mathcal{B}(x,2\varepsilon^{\star})} :=\sup_{y\not= z\in\mathcal{B}(x,2\varepsilon^{\star})} \frac{|f(y)-f(z)|}{d(y,z)}.
\end{equation*}

\begin{proof}[Proof of Lemma \ref{lem:firstevol}] By Lemma \ref{calderlem:epsk}, if $C(k/n)^{2/m}\leq\delta\leq 1$, then
\begin{align} 
    \nonumber |\alpha p(x)\varepsilon_{k}(x)^{m}-k/n| &\leq C\delta k/n\\
    \label{epskbds} \max_{1\leq i\leq n} |\alpha p(x_{i})\varepsilon_{k}(x_{i})^{m}-k/n| &\leq C\delta k/n
\end{align}
happen with probability at least $1-Cn\exp(-c\delta^2 k)$. Hence, we can assume \eqref{epskbds} holds, which implies 
\begin{equation} \label{epskbds1}
    \varepsilon_{k}(x),\max_{1\leq i\leq n}\varepsilon_{k}(x_{i})\leq C_{p}(k/n)^{1/m}.
\end{equation}
Take $C_{p}(k/n)^{2/m}\leq\delta\leq 1$, and consider
\begin{equation*}
    \mathcal{I}(x) :=card(\{i: N(x,x_{i})>0\}) \quad\text{ and }\quad \mathcal{J}(x) :=card(\{i: w_{k}(x,x_{i})>0\}).
\end{equation*}
By definition \eqref{def:sharednn} and \eqref{epskbds1}, $N(x,x_{i})>0$ only if
\begin{equation*}
    |x-x_{i}|\leq\varepsilon_{k}(x,x_{i})\leq C_{p}(k/n)^{1/m}
\end{equation*}
which, by Lemma \ref{calderlem:N} and our choice of $\delta$, implies that $\mathcal{I}(x)\leq C_{p}k$. Similarly, since $p>0$, $w_{k}(x,x_{i})>0$ iff 
\begin{equation*}
    d(x,x_{i})<\varepsilon_{k}(x,x_{i})\leq C_{p}(k/n)^{1/m}.
\end{equation*}
This last bit and \eqref{distexch} yield that $|x-x_{i}|\leq C_{p}(k/n)^{1/m}$. By Lemma \ref{calderlem:N} again, $\mathcal{J}(x)\leq C_{p}k$. All this together with Lemma \ref{lem:weight1} and \eqref{distexch}, \eqref{epskbds1}, leads to
\begin{equation} \label{LsL1diff} 
    |\mathcal{L}^{snn}u(x)-\mathcal{L}^1u(x)| \leq \frac{(\alpha n/k)^{1+2/m}}{2^{m+2} n}\sum_{i=1}^{n}\bigg|\frac{N(x,x_{i})}{k}-\frac{n w_{k}(x,x_{i})}{k}\bigg|\cdot|u(x)-u(x_{i})|
    \leq C_{p}\delta[f]_{1;\mathcal{B}(x,\varepsilon^{\star})}\bigg(\frac{k}{n}\bigg)^{-1/m},
\end{equation}
with probability at least $1-2\exp(-c_{p}\delta^2k)$. Let $\delta$ take the form $C_{p} (k/n)^{1/m}t$, where $(k/n)^{1/m}\leq t\leq (k/n)^{-1/m}$. Putting this back in \eqref{LsL1diff}, we obtain
\begin{equation*}
    |\mathcal{L}^{snn}u(x)-\mathcal{L}^1u(x)|\leq C_{p}[f]_{1;\mathcal{B}(x,\varepsilon^{\star})}t.
\end{equation*}
Then combining the events described in Lemma \ref{lem:weight1} and \eqref{epskbds} gives the desired conclusion.
\end{proof}

Define the operator $\mathcal{L}^2$ on $L^2(\mu)$ in \eqref{evolution} to be
\begin{equation*}
    \mathcal{L}^2 f(x) := \frac{(\alpha n/k)^{2+2/m}}{\alpha 2^{m+2}}\int_{\mathcal{B}(x,2\varepsilon^{\star})} w(x,y)(f(x)-f(y))p(y)\,d\mathcal{V}(y)
\end{equation*}
where
\begin{equation} \label{def:weightL2}
    w(x,y):=\int_{\mathcal{M}} \eta\bigg(\frac{d(x,z)}{\varepsilon(x)}\bigg)\eta\bigg(\frac{d(y,z)}{\varepsilon(y)}\bigg) p(z)\,d\mathcal{V}(z).
\end{equation}
In a similar fashion to Lemma \ref{lem:firstevol}, we claim that, as an operator on $L^2(\mu)\cap C^{0,1}(\mathcal{M})$, $\mathcal{L}^2$ closely describes $\mathcal{L}^1$.

\begin{lemma} \label{lem:midevol}
Let $f\in C^{0,1}(\mathcal{M})$ and $x\in\mathcal{M}$. Then
\begin{equation*}
    \mathbb{P}(|\mathcal{L}^1f(x)-\mathcal{L}^2f(x)|\geq C_{p}[f]_{1;\mathcal{B}(x,2\varepsilon^{\star})}t)\leq C_{p}n\exp(-c_{p} k (k/n)^{2/m}t^2)
\end{equation*}
with $ (k/n)^{1/m}\leq t\leq (k/n)^{-1/m}$.
\end{lemma}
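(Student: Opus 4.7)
The plan is to mirror the argument of Lemma \ref{lem:firstevol} one step further, replacing the random bandwidth $\varepsilon_{k}$ by the deterministic $\varepsilon$ and then converting a random sum to a deterministic integral. Concretely, introduce the intermediate operator
\begin{equation*}
\tilde{\mathcal{L}}f(x) := \frac{(\alpha n/k)^{1+2/m}}{2^{m+2}k}\sum_{i=1}^{n} w(x,x_{i})\bigl(f(x)-f(x_{i})\bigr),
\end{equation*}
with $w$ as in \eqref{def:weightL2}, and split
\begin{equation*}
\bigl|\mathcal{L}^{1}f(x)-\mathcal{L}^{2}f(x)\bigr| \leq \bigl|\mathcal{L}^{1}f(x)-\tilde{\mathcal{L}}f(x)\bigr| + \bigl|\tilde{\mathcal{L}}f(x)-\mathcal{L}^{2}f(x)\bigr|.
\end{equation*}
I aim to show each term is bounded by $C_{p}[f]_{1;\mathcal{B}(x,2\varepsilon^{\star})}t$ on events of the claimed probability.

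For the first piece I would invoke Lemma \ref{calderlem:epsk} together with the definition \eqref{def:eps} and take a union bound to secure the event $\{|\varepsilon_{k}(z)-\varepsilon(z)|\leq C_{p}\delta\varepsilon(z)\text{ for every }z\in\{x,x_{1},\dots,x_{n}\}\}$, whose failure probability is at most $Cn\exp(-c\delta^{2}k)$. On this event, the symmetric difference between $\overline{\mathcal{B}(z,\varepsilon_{k}(z))}$ and $\overline{\mathcal{B}(z,\varepsilon(z))}$ is a thin geodesic shell of volume $O(\delta\varepsilon(z)^{m})$ by \eqref{volexch}, so that $|w_{k}(x,x_{i})-w(x,x_{i})|\leq C_{p}\delta(k/n)$ after controlling each of the two radii in the double indicator product and using \eqref{densitybds}. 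Exactly as in the proof of Lemma \ref{lem:firstevol}, the number of indices $i$ on which either $w_{k}(x,x_{i})$ or $w(x,x_{i})$ is nonzero is at most $C_{p}k$, and on those indices $|f(x)-f(x_{i})|\leq C_{p}[f]_{1;\mathcal{B}(x,2\varepsilon^{\star})}(k/n)^{1/m}$ via \eqref{distexch} and \eqref{epskbds1}. Multiplying by the prefactor $(\alpha n/k)^{1+2/m}/(2^{m+2}k)$ yields $|\mathcal{L}^{1}f(x)-\tilde{\mathcal{L}}f(x)|\leq C_{p}[f]_{1;\mathcal{B}(x,2\varepsilon^{\star})}\delta(n/k)^{1/m}$; choosing $\delta = ct(k/n)^{1/m}$ with $c$ small yields the desired deviation with probability $\geq 1-Cn\exp(-c_{p}t^{2}(k/n)^{2/m}k)$, and the admissibility $(k/n)^{2/m}\leq\delta\leq 1$ is ensured precisely by the range of $t$ assumed.

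For the second piece, set $\psi(y):=w(x,y)(f(x)-f(y))$. Since $\psi(y)=0$ whenever $d(x,y)>2\varepsilon^{\star}$, I rewrite $\tilde{\mathcal{L}}f(x)=\frac{(\alpha n/k)^{1+2/m}}{2^{m+2}k}\sum_{i:|x_{i}-x|\leq 2\varepsilon^{\star}}\psi(x_{i})$ and, using $(\alpha n/k)^{2+2/m}/\alpha = (n/k)(\alpha n/k)^{1+2/m}$, also $\mathcal{L}^{2}f(x)=\frac{(\alpha n/k)^{1+2/m}}{2^{m+2}k}a$, with $a$ as in Lemma \ref{calderlem:conc} taken at $\epsilon=2\varepsilon^{\star}$. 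Bounding the intersection $\mathcal{B}(x,\varepsilon(x))\cap\mathcal{B}(y,\varepsilon(y))$ by $\mathcal{B}(x,\varepsilon(x))$ and applying \eqref{volexch} gives $\|w(x,\cdot)\|_{\infty}\leq C_{p}(k/n)$, so that $\|\psi\|_{\infty}\leq C_{p}[f]_{1;\mathcal{B}(x,2\varepsilon^{\star})}(k/n)(k/n)^{1/m}$. Applying Lemma \ref{calderlem:conc} with the same choice $\delta = ct(k/n)^{1/m}$ and using $n(\varepsilon^{\star})^{m}\asymp_{p}k$ then gives
\begin{equation*}
|\tilde{\mathcal{L}}f(x)-\mathcal{L}^{2}f(x)| = \frac{(\alpha n/k)^{1+2/m}}{2^{m+2}k}|\Psi - a| \leq C_{p}[f]_{1;\mathcal{B}(x,2\varepsilon^{\star})}t,
\end{equation*}
on an event of probability $\geq 1-2\exp(-c_{p}t^{2}(k/n)^{2/m}k)$. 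Combining the two steps via the triangle inequality finishes the proof.

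The main obstacle I expect to face is the shell estimate $|w_{k}(x,y)-w(x,y)|\leq C_{p}\delta(k/n)$: because $w_{k}$ and $w$ are integrals of products of two Heaviside factors in which \emph{both} radii vary randomly, one must expand the product, treat the two shells independently via the triangle inequality, and use \eqref{volexch} together with the Jacobian bound \eqref{jacobian} (or work in normal coordinates around the relevant center) to convert relative radius deviations of size $\delta$ into volume deviations of size $\delta\varepsilon^{m}$. Once this estimate is in place, the rest of the argument is mechanical bookkeeping of the prefactor and a standard union bound, which is the origin of the factor $n$ in the final probability bound.
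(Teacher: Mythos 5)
Your proposal is correct and follows essentially the same route as the paper: the same intermediate operator (the paper calls it $\mathcal{L}^{\sharp}$, defined exactly as your $\tilde{\mathcal{L}}$), the same triangle-inequality split, and the same ``routine application'' of Lemma \ref{calderlem:conc} with $\psi(y)=w(x,y)(f(x)-f(y))$ and $\epsilon=2\varepsilon^{\star}$ for the second piece, with identical bookkeeping of the prefactor and the choice $\delta\asymp t(k/n)^{1/m}$. The only organizational difference is in the first piece: the paper splits the index set into a core $\tilde{A}(-C\delta)$, where Lemma \ref{lem:twoweights} gives the refined bound $|w_{k}(x,x_{i})-w(x,x_{i})|\leq C_{p}\delta(k/n)$, and an annulus $\tilde{A}(C\delta)\setminus\tilde{A}(-C\delta)$, where it only uses the crude bound $C_{p}k/n$ but gains the factor $\delta$ from a Chernoff estimate on the annulus cardinality; you instead apply the shell estimate uniformly to all $O_{p}(k)$ contributing indices. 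Your version is legitimate, since the containment $(A_1\cap B_1)\triangle(A_2\cap B_2)\subset(A_1\triangle A_2)\cup(B_1\triangle B_2)$ holds whether or not the balls intersect, so on the event \eqref{epsassump} the bound $|w_{k}-w|\leq C_{p}\delta(k/n)$ (via \eqref{volexch}, absorbing the $O(K\varepsilon^{m+2})$ curvature correction using $\delta\gtrsim(k/n)^{2/m}$) needs no restriction to the core; the two bookkeeping schemes produce the same final estimate and the same probability bound.
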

The proof of Lemma \ref{lem:midevol} is similar to that of Lemma \ref{lem:firstevol} but a tad more involved and therefore is given in the Appendix \ref{sec:midevol}. We now define
\begin{equation*}
    \omega(x,y) := (\alpha n/k)w(x,y) = (\alpha n/k)\int_{\mathcal{M}} \eta\bigg(\frac{d(x,z)}{\varepsilon(x)}\bigg)\eta\bigg(\frac{d(y,z)}{\varepsilon(y)}\bigg)p(z)\,d\mathcal{V}(z).
\end{equation*}
If $y\in\mathcal{B}(x,2\varepsilon^{\star})$ then by Assumption \ref{assumption} and \eqref{expmap}, we can write $\omega(x,y) = \tilde{\omega}(0,v)$, where $v\in B(0,2\varepsilon^{\star})$ and $\exp_{x}(v)=y$. Denote $\tilde{\omega}_0(v) := \tilde{\omega}(0,v)$. For the same reason, we can write 
\begin{align}
    \nonumber \tilde{\omega}_0(v) = \omega(x,y) &= (\alpha n/k)\int_{\mathcal{M}} \eta\bigg(\frac{d(x,z)}{\varepsilon(x)}\bigg)\eta\bigg(\frac{d(y,z)}{\varepsilon(y)}\bigg) p(z)\,d\mathcal{V}(z)\\
    \label{omega0} &= (\alpha n/k)\int_{B(0,2\varepsilon(x,y))} \eta\bigg(\frac{|u|}{\tilde{\varepsilon}(0)}\bigg)\eta\bigg(\frac{|u-v|}{\tilde{\varepsilon}(v)}\bigg) \tilde{p}(u)J_{x}(u)\,du.
\end{align}
where $\varepsilon(x,y):=\varepsilon(x)+\varepsilon(y)$. Note that $\tilde{\varepsilon}(0)=\varepsilon(x)$ and $\tilde{\varepsilon}(v)=\varepsilon(y)$. It's clear from \eqref{omega0} that $\tilde{\omega}_0$ is continuous. Furthermore, it will be proved in the Appendix \ref{sec:frakw} that $\tilde{\omega}_0\in C^2(B(0,2\varepsilon^{\star}))$ and that $\|\tilde{\omega}_0\|_{C^2(B(0,2\varepsilon^{\star}))}=O(1)$. Moreover, 
\begin{equation} \label{omegader}
    \nabla\tilde{\omega}_0(0) = 0.
\end{equation}
These facts will be needed in the proof of Proposition \ref{prop:graphcons} below, which is the final ingredient to the proof of Theorem \ref{thm:consistency}.

\begin{proposition} \label{prop:graphcons}
Let $f\in C^3(\mathcal{M})$ and $x\in\mathcal{M}$. Then the following holds
\begin{equation*}
    |\mathcal{L}^2f(x) - \frac{1}{m+2}\Delta^{snn} f(x)|\leq C_{p}(1+\|f\|_{C^3(\mathcal{B}(x,2\varepsilon^{\star}))})\varepsilon(x).
\end{equation*}
\end{proposition}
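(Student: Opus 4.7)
The plan is to evaluate $\mathcal{L}^2 f(x)$ in normal coordinates at $x$, Taylor-expand each factor in the integrand, and match the resulting moments against the expansion
\[
\tfrac{1}{m+2}\Delta^{snn} f(x) \;=\; -\tfrac{p(x)^{-2/m}}{2(m+2)}\bigl[\Delta f(x) + (1 - 2/m)\nabla\log p(x)\cdot\nabla f(x)\bigr].
\]
This is a purely deterministic statement. Writing $y = \exp_x(v)$ and using Remark \ref{rem:tilde} together with $\omega = (\alpha n/k) w$, I recast
\[
\mathcal{L}^2 f(x) \;=\; \tfrac{p(x)^{-1-2/m}}{\alpha\,2^{m+2}\,\varepsilon(x)^{m+2}}\int_{B(0,2\varepsilon^{\star})} \tilde\omega_0(v)\,(\tilde f(0) - \tilde f(v))\,\tilde p(v)\,J_x(v)\,dv,
\]
where the prefactor simplification uses \eqref{def:eps}. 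I then Taylor-expand $\tilde f$ to third order (remainder $O(\|f\|_{C^3}|v|^3)$), $\tilde p$ to second order, use $J_x = 1 + O(K|v|^2)$ from \eqref{jacobian}, and write $\tilde\omega_0(v) = \tilde\omega_0(0) + O(|v|^2)$ by the Appendix facts $\nabla\tilde\omega_0(0) = 0$ (see \eqref{omegader}) and $\|\tilde\omega_0\|_{C^2} = O(1)$.

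The next step is the moment computation. To leading order $\tilde\omega_0(\varepsilon(x) u) = V(u) + O(\varepsilon(x))$, where $V(u) := |B(0,1)\cap B(u,1)|$ is the (purely radial) volume-of-intersection profile with $V(0) = \alpha$. After multiplying out the Taylor expansions, the linear-in-$v$ piece $-\tilde\omega_0(0)\tilde p(0)\nabla\tilde f(0)\cdot v$ integrates to $O(\varepsilon^{m+3})$ because $\int V(u)\,u\,du = 0$, and the deviation of $\tilde\omega_0$ from radial symmetry is $O(\varepsilon)$, controlled by $\nabla\tilde\omega_0(0) = 0$ and $\|\tilde\omega_0\|_{C^2} = O(1)$. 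The quadratic-in-$v$ pieces $-\tfrac{1}{2}\tilde\omega_0(0)\tilde p(0)\, v^T H_{\tilde f}(0) v$ and $-\tilde\omega_0(0)(\nabla\tilde f(0)\cdot v)(\nabla\tilde p(0)\cdot v)$, combined with the scalar second-moment identity $\int V(u)\,u_i u_j\,du = \delta_{ij}\,\tfrac{1}{m}\int V(u)|u|^2\,du$, produce a multiple of $p(x)\Delta f(x)$ and a multiple of $\nabla p(x)\cdot\nabla f(x)$, respectively.

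To match the coefficient $(1-2/m)$ of $\nabla\log p\cdot\nabla f$, I also include the linear-in-$u$ piece of $\tilde\omega_0(\varepsilon(x) u) - V(u)$, which arises from both the $v$-dependence of $\varepsilon(y)$ and the variation of $\tilde p$ inside \eqref{def:weightL2}. Differentiating \eqref{def:eps} gives $\nabla\varepsilon(x) = -(\varepsilon(x)/m)\nabla\log p(x)$, so each of these corrections is proportional to $(\nabla\log p(x)\cdot u)$ times a computable radial factor; when paired with the linear $\tilde f$-expansion term $-\varepsilon(x)\nabla\tilde f(0)\cdot u$, they produce an additional $\nabla\log p\cdot\nabla f$ contribution of order $\varepsilon(x)^{m+2}$. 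Assembling this with the direct density-gradient piece from the $\tilde p$ Taylor expansion yields exactly the coefficient $(1-2/m)$, in place of the naive $2$ one would obtain by treating $\tilde\omega_0$ as the pure profile $V$.

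The main obstacle I expect is precisely this density-driven $\varepsilon$-anisotropy bookkeeping: one must recognize that the $O(\varepsilon)$ deviation of $\tilde\omega_0$ from a radial profile contributes to the leading limit rather than merely to the error, and one must correctly match the resulting moment against the Hessian contribution to recover the Laplace--Beltrami structure. Once this is verified, all residual error terms --- the cubic $\tilde f$ remainder, the quadratic $\tilde p$ and $J_x$ remainders, and the $O(|v|^2)$ second-order deviation of $\tilde\omega_0$ --- integrate to quantities bounded by $C_p(1 + \|f\|_{C^3(\mathcal{B}(x,2\varepsilon^{\star}))})\,\varepsilon(x)^{m+3}$, which after dividing by the $\varepsilon(x)^{-(m+2)}$ prefactor gives the claimed $O(\varepsilon(x))$ bound.
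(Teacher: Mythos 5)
Your route is structurally different from the paper's, and the difference matters. The paper never expands the weight as a non-constant profile: it keeps the outer cutoff $\eta\big(|v|/(\varepsilon+\tilde{\varepsilon}(v))\big)$ explicit, asserts $\tilde{\omega}_0(2\varepsilon v)=\tilde{\omega}_0(0)+O(\varepsilon^2)$ (via \eqref{omegader} and $\|\tilde{\omega}_0\|_{C^2(B(0,2\varepsilon^{\star}))}=O(1)$ from the Appendix), and extracts the $(1-2/m)$ coefficient entirely from the anisotropy of the integration domain $B$ through the change of variables $\Phi$. You instead take $\tilde{\omega}_0(\varepsilon(x)u)=V(u)+O(\varepsilon)$ with $V(u)=|B(0,1)\cap B(u,1)|$ a genuinely non-constant radial profile, and recover $(1-2/m)$ from the $O(\varepsilon)$ anisotropic correction to $V$ paired with the linear term of $\tilde{f}$. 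These two descriptions of $\tilde{\omega}_0$ are mutually exclusive, and you invoke both: a function equal to $V(v/\varepsilon(x))$ up to $O(\varepsilon)$ decays from $\alpha$ to $0$ across a support of radius $\approx 2\varepsilon(x)$, so its gradient is of size $\varepsilon(x)^{-1}$ (indeed $V(u)=\alpha-c_m|u|+O(|u|^2)$ near $u=0$), and you therefore cannot also use $\nabla\tilde{\omega}_0(0)=0$ together with $\|\tilde{\omega}_0\|_{C^2}=O(1)$ to reduce the deviation of $\tilde{\omega}_0$ from $\tilde{\omega}_0(0)$ to $O(|v|^2)$ when disposing of the linear-in-$v$ piece. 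You must commit to one description and carry it through every term.

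More concretely, the final ``assembling'' step does not close with the profile you have identified. Your own moment identity gives $\int V(u)u_iu_j\,du=\delta_{ij}\tfrac{1}{m}\int V(u)|u|^2\,du=\delta_{ij}\tfrac{2\alpha^2}{m+2}$, so the Hessian contribution to your rewritten integral is
\begin{equation*}
\frac{p(x)^{-1-2/m}}{\alpha\,2^{m+2}\,\varepsilon(x)^{m+2}}\cdot\Big(-\tfrac{1}{2}\Big)\,p(x)\,\varepsilon(x)^{m+2}\,\frac{2\alpha^2}{m+2}\,\Delta f(x)
=-\frac{\alpha}{2^{m+2}}\cdot\frac{p(x)^{-2/m}}{m+2}\,\Delta f(x),
\end{equation*}
whereas the target $\tfrac{1}{m+2}\Delta^{snn}f(x)$ carries the coefficient $-\tfrac{p(x)^{-2/m}}{2(m+2)}$ on $\Delta f(x)$; the two differ by the factor $\alpha/2^{m+1}$, which is not $1$ for any $m\geq 2$ (e.g.\ $\pi/8$ for $m=2$). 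The paper's computation avoids this mismatch only because it replaces the weight by the constant $\tilde{\omega}_0(0)\approx\alpha$ on the full ball of radius $\approx 2\varepsilon$, whose second moment is exactly $2^{m+1}$ times that of $V$. So as written your argument establishes a limit with a different multiplicative constant than the one claimed; you must either locate where the compensating factor $2^{m+1}/\alpha$ reenters your expansion, or reconcile your (more faithful) treatment of $\tilde{\omega}_0$ with the normalization $2^{m+2}$ built into \eqref{def:graphLap}. Until that constant is accounted for, the proof of the stated inequality is not complete.
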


In what follows and the remaining of this paper, we will use the big $O$ notation to indicate a quantity whose magnitude is bounded by a constant multiple of what is inside the brackets, and this constant is allowed to depend on various factors at play, such as the intrinsic values mentioned in \ref{sec:setup} as well as $\|f\|_{C^3(\mathcal{M})}$ and $\|p\|_{C^2(\mathcal{M})}$.

\begin{proof}[Proof of Proposition \ref{prop:graphcons}]
For simplicity in presentation, we set $\varepsilon:=\varepsilon(x)$ here. Note by definition that
\begin{equation*}
    \alpha\mathcal{L}^2 f(x) = \frac{(\alpha n/k)^{1+2/m}}{2^{m+2}}\int_{\mathcal{B}(x,2\varepsilon^{\star})} \eta\bigg(\frac{d(x,y)}{\varepsilon+\varepsilon(y)}\bigg)\omega(x,y)(f(x)-f(y))p(y)\,d\mathcal{V}(y).
\end{equation*}
Then by the change in variables given by \eqref{expmap} (see Remark \ref{rem:tilde}),
\begin{align} 
    \nonumber \alpha\mathcal{L}^2 f(x) &= -\frac{(\alpha n/k)^{1+2/m}}{2^{m+2}}\int_{B(0,2\varepsilon^{\star})\subset T_{x}\mathcal{M}} \eta\bigg(\frac{|v|}{\varepsilon + \tilde{\varepsilon}(v)}\bigg)\tilde{\omega}_0(v)(\tilde{f}(v)-\tilde{f}(0))\tilde{p}(v)J_{x}(v)\,dv\\
    \label{Lap2} &= -\frac{(\alpha n/k)^{2/m}}{4\tilde{p}(0)}\int_{B(0,C^{\star})} \eta(|v|[(1/2)(1+s(2\varepsilon v)]^{-1})\tilde{\omega}_0(2\varepsilon v)(\tilde{f}(2\varepsilon v)-\tilde{f}(0))\tilde{p}(2\varepsilon v)J_{x}(2\varepsilon v)\,dv
\end{align}
where $s(v) := \tilde{\varepsilon}(v)/\varepsilon = (\tilde{p}(0)/\tilde{p}(v))^{1/m}$ and $C^{\star}:=\varepsilon^{\star}/\varepsilon$. By the Taylor's expansions, 
\begin{align*}
    s(2\varepsilon v) &= 1 - \frac{2\varepsilon}{m}\nabla\log\tilde{p}(0)\cdot v + O(\varepsilon^2)\\
    \tilde{p}(2\varepsilon v) &= \tilde{p}(0) + 2\nabla\tilde{p}(0)\cdot\varepsilon v+O(\varepsilon^2)\\
    \tilde{f}(2\varepsilon v)-\tilde{f}(0) &= 2\varepsilon\nabla\tilde{f}(0)\cdot v +2\varepsilon^2 v\cdot\nabla^2\tilde{f}(0) v+ O(\varepsilon^3).
\end{align*}
Similarly, by \eqref{omegader} and that $\|\tilde{\omega}_0\|_{C^2(B(0,2\varepsilon^{\star}))}=O(1)$, we also have
\begin{equation*}
    \tilde{\omega}_0(2\varepsilon v) = \tilde{\omega}_0(0) + O(\varepsilon^2).
\end{equation*}
Putting these back in \eqref{Lap2}, and recalling \eqref{jacobian}, we can write $\alpha\mathcal{L}^2 f(x) = {\bf L} f(x) + O(\varepsilon)$, where
\begin{multline*}
    {\bf L} f(x) := -\frac{\varepsilon(\alpha n/k)^{2/m}\tilde{\omega}_0(0)}{2}\int_{B} \eta(|v|[1-(\varepsilon/m)\nabla\log\tilde{p}(0)\cdot v)]^{-1})\\
    (\nabla\tilde{f}(0)\cdot v +\varepsilon v\cdot\nabla^2\tilde{f}(0) v) (1+2\nabla\log\tilde{p}(0)\cdot\varepsilon v)\,du
\end{multline*}
where
\begin{equation*}
    B :=\{v\in\mathbb{R}^{m}: |v|\leq 1-\frac{\varepsilon}{m}\nabla\log\tilde{p}(0)\cdot v\}. 
\end{equation*}
Consider the change in variables
\begin{equation*}
    v\mapsto z=\Phi(v):=\frac{v}{1-\frac{\varepsilon}{m}\nabla\log\tilde{p}(0)\cdot v}.
\end{equation*}
Then when $\varepsilon$ is small enough, $\Phi$ is invertible and
\begin{equation} \label{vtoz}
    v = \Phi^{-1}(z) = z(1-\frac{\varepsilon}{m}\nabla\log\tilde{p}(0)\cdot z) + O(\varepsilon^2). 
\end{equation}
Therefore $D\Phi^{-1}(v) = I - \frac{\varepsilon}{m}((\nabla\log\tilde{p}(0)\cdot z)I+ \nabla\log\tilde{p}(0)\otimes z)$. Utilizing the fact that  $det(I+\varepsilon X) = 1 +\varepsilon Tr(X) + O(\varepsilon^2)$, we obtain
\begin{equation} \label{detvir}
    |det(D\Phi^{-1}(z))| = |1 - \varepsilon(1+\frac{1}{m})(\nabla\log\tilde{p}(0)\cdot z)| + O(\varepsilon^2),
\end{equation}
which, by the second point in Assumption \ref{assumption}, becomes
\begin{equation} \label{det}
    |det(D\Phi^{-1}(z))| = 1 - \varepsilon(1+\frac{1}{m})(\nabla\log\tilde{p}(0)\cdot z) + O(\varepsilon^2)
\end{equation}
for $|z|\leq 1$. We apply \eqref{vtoz}, \eqref{det} to ${\bf L}f(x)$ and transform its integral to
\begin{multline*} 
    -\frac{\varepsilon (\alpha n/k)^{2/m}\tilde{\omega}_0(0)}{2}\int_{B(0,1)} 
    (\nabla\tilde{f}(0)\cdot z-\frac{\varepsilon}{m}(\nabla\log\tilde{p}(0)\cdot z)(\nabla\tilde{f}(0)\cdot z) +\varepsilon z\cdot\nabla^2\tilde{f}(0) z)\\ (1+2\nabla\log\tilde{p}(0)\cdot\varepsilon z)(1 - \varepsilon(1+\frac{1}{m})(\nabla\log\tilde{p}(0)\cdot z))\,dz + O(\varepsilon),
\end{multline*}
which, through simple calculations, yields
\begin{equation} \label{bfL13}
    {\bf L} f(x)= -\frac{\sigma\tilde{\omega}_0(0)}{2\tilde{p}(0)^{2/m}} \bigg((1-2/m)\nabla\log\tilde{p}(0)\cdot\nabla\tilde{f}(0) + \Delta\tilde{f}(0)\bigg) + O(\varepsilon),
\end{equation}
where, recall that $\sigma=\int_{B(0,1)}|u_1|^2\eta(|u|)\,du$. It will be shown in the Appendix \ref{sec:frakw} that $\tilde{\omega}_0(0) = \alpha + O(\varepsilon)$; putting this in \eqref{bfL13} and recalling that $\alpha\mathcal{L}^2 f(x) = {\bf L}f(x) + O(\varepsilon^2)$ deliver us
\begin{align*}
    \mathcal{L}^2 f(x) &= -\frac{\tilde{p}(0)^{-2/m}}{2(m+2)}\bigg((1-2/m)\nabla\tilde{f}(0)\cdot\nabla\log\tilde{p}(0) + \Delta\tilde{f}(0)\bigg) + O(\varepsilon)\\
    &= -\frac{1}{2(m+2)\tilde{p}(0)}div(\tilde{p}^{1-2/m}\nabla\tilde{f})(0) + O(\varepsilon) = -\frac{1}{m+2}\Delta^{snn}f(x) + O(\varepsilon),
\end{align*}
since $\sigma=\alpha/(m+2)$. By Remark \ref{rem:exchnorm}, $\|\tilde{f}\|_{C^3(B(0,2\varepsilon^{\star}))}$ is equivalent to $\|\tilde{f}\|_{C^3(\mathcal{B}(x,2\varepsilon^{\star}))}$; hence the proof is now finished. 
\end{proof}

\begin{remark} \label{rem:secondpt}
Note from \eqref{detvir} that in place of $c'_{p,\mathcal{M}}$ in Assumption \ref{assumption}, one can take any estimate upper bound of $\max_{x\in\mathcal{M}}|\nabla p(x)|$.
\end{remark}

\begin{proof}[Proof of Theorem \ref{thm:consistency}]
Let $f\in C^3(\mathcal{M})$. Observe from the proof of Lemma \ref{lem:firstevol} that \eqref{LsL1diff} follows for any $x\in X$ once the uniformity condition \eqref{epskbds} is satisfied, and similarly, from the proof of Lemma \ref{lem:midevol} that \eqref{firstsum}, \eqref{secondsum} hold once \eqref{epsassump} does. Hence as a result of these lemmas, if $(k/n)^{1/m}\leq t\leq (k/n)^{-1/m}$,
\begin{equation} \label{31}
    \max_{1\leq i\leq n} |\mathcal{L}^{snn}f(x_{i})-\mathcal{L}^2f(x_{i})|\leq C_{p}\|f\|_{C^1(\mathcal{M})}t
\end{equation}
with probability at least $1-C_{p}n\exp(-c_{p}k(k/n)^{2/m}t^2)$. It also follows from Proposition \ref{prop:graphcons} that,
\begin{equation} \label{32}
    \max_{1\leq i\leq n} |\mathcal{L}^2 f(x_{i})-\frac{1}{m+2}\Delta^{snn}f(x_{i})|\leq C_{p}(1+\|f\|_{C^3(\mathcal{M})})(k/n)^{1/m}\leq C_{p}(1+\|f\|_{C^3(\mathcal{M})})t.
\end{equation}
Combining \eqref{31}, \eqref{32}, we obtain the theorem conclusion.
\end{proof}

\section{Conclusion}
In this paper, we start the theoretical study of Laplacians based on SNN graphs, which was not present in the literature. We establish a consistency result and show that the large scale asymptotics of unnormalized SNN graph Laplacian reach the same weighted Laplace-Beltrami limit operator as those of unnormalized $k$-NN graph Laplacian. \\
Some directions for future investigation, that we plan to follow up this work with, are: (1) analyzing the spectral convergence of SNN graph Laplacian, (2) analyzing the effectiveness of spectral clustering methods using SNN graphs. In the second direction, our intention is to include the mixed distribution case, $\mu=\sum_{i=1}^{K}\mu_{i}$, in which density can vanish on the manifold. It is also of interest to extend the analysis to the case of compact manifolds with boundaries. 

\section{Appendix} \label{appx}

\subsection{Proof of Lemma \ref{lem:weight1}} \label{sec:weight1}

Denote $\mathcal{B}_{k}(x):=\mathcal{B}(x,\varepsilon_{k}(x))$ and recall that $B_{k}(x)=B(x,\varepsilon_{k}(x))$. Define also
\begin{equation*}
    B_{k}(x)^{o} :=\{ z\in\overline{B_{k}(x)}: z\not= x\}.
\end{equation*}
We apply \eqref{calderlem2conc} in Lemma \ref{calderlem:conc} to $N(x,y)$ in \eqref{def:sharednn} - with $\psi(z)= 1_{B_{k}(y)^{o}}(z)$ and 
\begin{equation*} 
    N(x,y) = \sum_{i: 0<|x_{i}-x|\leq\varepsilon_{k}(x)} \psi(x_{i}) = \sum_{i: 0<|x_{i}-x|\leq\varepsilon_{k}(x)} 1_{B_{k}(y)^{o}}(x_{i}),
\end{equation*}
to get, for every $\varepsilon_{k}(x)^2\leq\delta\leq 1$, 
\begin{multline} \label{fact1}
    \bigg|N(x,y)-n\int_{\mathcal{B}_{k}(x)} 1_{B_{k}(y)}(z) p(z)\,d\mathcal{V}(z)\bigg| \\
    =\bigg|N(x,y)-n\int_{B_{k}(y)\cap\mathcal{M}} 1_{\mathcal{B}_{k}(x)}(z)p(z)\,d\mathcal{V}(z)\bigg| \leq Cp_{max}\delta n\varepsilon_{k}(x)^{m},
\end{multline}
which happens with probability at least $1-2\exp(-cp_{max}\delta^2n\varepsilon_{k}(x)^{m})$. Moreover, since $1_{\mathcal{B}_{k}(x)}(z)$ is a measurable function on $\mathcal{M}$, it follows from \eqref{calderlem1conc} in Lemma \ref{calderlem:conc} that
\begin{equation} \label{fact2}
    \bigg|\int_{B_{k}(y)\cap\mathcal{M}} 1_{\mathcal{B}_{k}(x)}(z) p(z)\,d\mathcal{V}(z)-\int_{\mathcal{B}_{k}(y)} 1_{\mathcal{B}_{k}(x)}(z) p(z)\,d\mathcal{V}(z)\bigg| \leq Cp_{max}\varepsilon_{k}(y)^{m}\delta.
\end{equation}
Combining \eqref{fact1}, \eqref{fact2} yields us
\begin{equation*}
    |N(x,y)-nw_{k}(x,y)|\leq Cp_{max}\delta n(\max\{\varepsilon_{k}(x),\varepsilon_{k}(y)\})^{m}=Cp_{max}\delta n\varepsilon_{k}(x,y)^{m},
\end{equation*}
with probability at least $1-2\exp(-cp_{max}\delta^2n\varepsilon_{k}(x,y)^{m})$. \qed

\subsection{Proof of Lemma \ref{lem:midevol}} \label{sec:midevol}

We define an intermediate operator between $\mathcal{L}^1$ and $\mathcal{L}^2$ as
\begin{equation*}
    \mathcal{L}^{\sharp}f(x) :=\frac{(\alpha n/k)^{1+2/m}}{2^{m+2}k}\sum_{i=0}^{n} w(x,x_{i})(f(x)-f(x_{i}))
\end{equation*}
where $w$ is as in \eqref{def:weightL2}. Suppose the following holds. 

\begin{lemma} \label{lem:secondevol}
Let $f\in C^{0,1}(\mathcal{M})$ and $x\in\mathcal{M}$. Then
\begin{equation*}
    \mathbb{P}(|\mathcal{L}^1f(x)-\mathcal{L}^{\sharp}f(x)|\geq C_{p}[f]_{1;\mathcal{B}(x,2\varepsilon^{\star})}t)\leq C_{p}n\exp(-c_{p} k (k/n)^{2/m}t^2)
\end{equation*}
with $ (k/n)^{1/m}\leq t\leq  (k/n)^{-1/m}$.
\end{lemma}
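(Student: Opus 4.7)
The plan is to compare $\mathcal{L}^1$ and $\mathcal{L}^{\sharp}$ termwise, exploiting the fact that the only difference between them lies in swapping the random radius $\varepsilon_{k}(\cdot)$ for the deterministic $\varepsilon(\cdot)$ inside the kernels defining $w_{k}$ and $w$. By Lemma \ref{calderlem:epsk}, these radii are uniformly close, which by a volume-of-annulus argument will make the weights close, and then a Lipschitz bound on $f$ together with a count of non-vanishing summands closes the proof.

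First I would apply Lemma \ref{calderlem:epsk} in the form of \eqref{epskbds} jointly to the base point $x$ and to every sample $x_{i}$, so that on an event of probability at least $1-Cn\exp(-c\delta^{2}k)$ we have $|\varepsilon_{k}(y)-\varepsilon(y)|\le C_{p}\delta\,\varepsilon(y)$ uniformly for $y\in\{x,x_{1},\dots,x_{n}\}$, together with $\varepsilon_{k}(y),\varepsilon(y)\asymp_{p}(k/n)^{1/m}$. On this event, both $w_{k}(x,x_{i})$ and $w(x,x_{i})$ vanish unless $d(x,x_{i})\le C_{p}(k/n)^{1/m}$, and by Lemma \ref{calderlem:N} the number of such indices is at most $C_{p}k$.

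Next I would estimate $|w_{k}(x,y)-w(x,y)|$ pointwise. Writing the difference of the two integrals as an integral over the symmetric difference of $\mathcal{B}(x,\varepsilon_{k}(x))\cap\mathcal{B}(y,\varepsilon_{k}(y))$ and $\mathcal{B}(x,\varepsilon(x))\cap\mathcal{B}(y,\varepsilon(y))$, and using that the geodesic annulus $\mathcal{B}(x,r(1+C_{p}\delta))\setminus\mathcal{B}(x,r(1-C_{p}\delta))$ has volume at most $C_{p}\delta\,r^{m}$ (by \eqref{volexch}), one obtains
\begin{equation*}
    |w_{k}(x,y)-w(x,y)|\le C_{p}\,\delta\,\varepsilon(x,y)^{m}\le C_{p}\,\delta\,(k/n),
\end{equation*}
uniformly for $y$ in the active range. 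Combining this with $|f(x)-f(x_{i})|\le [f]_{1;\mathcal{B}(x,2\varepsilon^{\star})}\,d(x,x_{i})\le C_{p}[f]_{1;\mathcal{B}(x,2\varepsilon^{\star})}(k/n)^{1/m}$ and the summand count $C_{p}k$, I would assemble
\begin{equation*}
    |\mathcal{L}^{1}f(x)-\mathcal{L}^{\sharp}f(x)|\le \frac{(\alpha n/k)^{1+2/m}}{2^{m+2}k}\cdot C_{p}k\cdot C_{p}\delta(k/n)\cdot C_{p}[f]_{1;\mathcal{B}(x,2\varepsilon^{\star})}(k/n)^{1/m}\le C_{p}\,\delta\,(n/k)^{1/m}\,[f]_{1;\mathcal{B}(x,2\varepsilon^{\star})}.
\end{equation*}
Choosing $\delta=C_{p}(k/n)^{1/m}t$ with $(k/n)^{1/m}\le t\le(k/n)^{-1/m}$ produces the claimed bound $C_{p}[f]_{1;\mathcal{B}(x,2\varepsilon^{\star})}t$ with failure probability $C_{p}n\exp(-c_{p}k(k/n)^{2/m}t^{2})$, matching the statement.

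The main obstacle I expect is the geometric estimate on $|w_{k}-w|$: the two kernel products differ on a region that is the symmetric difference of an \emph{intersection} of two geodesic balls whose radii fluctuate independently in $x$ and $y$. One must bound this region's $p$-measure without losing the factor $\delta$, which requires carefully applying \eqref{volexch} and \eqref{distexch} (valid thanks to Assumption \ref{assumption}) and noting that $p$ is bounded above. Once this pointwise kernel bound is in hand, the rest is bookkeeping: the normalization $(\alpha n/k)^{1+2/m}/(2^{m+2}k)$ precisely absorbs the product of the number of active indices, the per-term weight bound, and the Lipschitz increment, leaving a clean linear dependence on $\delta$.
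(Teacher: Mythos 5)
Your proposal is correct and follows essentially the same route as the paper: concentrate the radii via Lemma \ref{calderlem:epsk}, convert the radius discrepancy into a $C_{p}\delta(k/n)$ bound on $|w_{k}-w|$ by an annulus-volume argument using \eqref{volexch}, count the $O(k)$ active indices via Lemma \ref{calderlem:N}, and close with the Lipschitz increment and the choice $\delta=C(k/n)^{1/m}t$. The one place you diverge is in the handling of the index set: the paper introduces the nested sets $\tilde{A}(-C\delta)\subset A,\tilde{A}(0)\subset\tilde{A}(C\delta)$ (see \eqref{Ainclusion}), applies the pointwise weight bound (its Lemma \ref{lem:twoweights}) only on the inner set $\tilde{A}(-C\delta)$ where both weights are guaranteed nonzero, and disposes of the shell $\tilde{A}(C\delta)\setminus\tilde{A}(-C\delta)$ separately via a Chernoff bound \eqref{chern} on its cardinality combined with the crude bound $|w_{k}-w|\leq C_{p}k/n$. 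You instead assert the pointwise bound $|w_{k}(x,y)-w(x,y)|\leq C_{p}\delta(k/n)$ uniformly over all active $y$; this is legitimate, since $(A_{1}\cap B_{1})\triangle(A_{2}\cap B_{2})\subseteq(A_{1}\triangle A_{2})\cup(B_{1}\triangle B_{2})$ reduces the difference to the $p$-measure of two geodesic annuli regardless of whether either intersection is empty, so your version saves the Chernoff step at the cost of having to state this containment explicitly. Both decompositions land on the same $C_{p}\delta(n/k)^{1/m}[f]_{1;\mathcal{B}(x,2\varepsilon^{\star})}$ estimate and the same failure probability.
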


Observe from definition that $|w(x,\cdot)|\leq Cp_{max}k/n$. Hence a routine application of Lemma \ref{calderlem:conc} to $\mathcal{L}^{\sharp}f(x)$ yields
\begin{equation*} 
    \mathbb{P}(|\mathcal{L}^{\sharp}f(x) - \mathcal{L}^2f(x)|\geq C_{p}[f]_{1;\mathcal{B}(x,2\varepsilon^{\star})}t)\leq 2\exp(-c_{p}k (k/n)^{2/m}t^2)
\end{equation*}
for $(k/n)^{1/m}\leq t\leq (k/n)^{-1/m}$, which together with Lemma \ref{lem:secondevol}, concludes Lemma \ref{lem:midevol}. \qed

\subsubsection{Proof of Lemma \ref{lem:secondevol}}

Recall that $\varepsilon_{k}(x,y)=\varepsilon_{k}(x)+\varepsilon_{k}(y)$ and $\varepsilon(x,y)=\varepsilon(x)+\varepsilon(y)$. Fix $x\in\mathcal{M}$, and let
\begin{align*}
    A &:= \{i: |x-x_{i}|< \varepsilon_{k}(x,x_{i})\}\\
    \tilde{A}(s) &:= \{i: |x-x_{i}|< \varepsilon(x,x_{i})(1+s)\},
\end{align*}
for $s\geq 0$. We can rewrite $\mathcal{L}^1f(x)$, $\mathcal{L}^{\sharp}f(x)$ in terms of these sets, as follows,
\begin{align*}
    \mathcal{L}^1f(x) &= \frac{(\alpha n/k)^{1+2/m}}{2^{m+2}n}\sum_{i\in A} \frac{n w_{k}(x,x_{i})}{k}(f(x)-f(x_{i}))\\
    \mathcal{L}^{\sharp}f(x) &= \frac{(\alpha n/k)^{1+2/m}}{2^{m+2}n}\sum_{i\in \tilde{A}(0)} \frac{n w(x,x_{i})}{k}(f(x)-f(x_{i})).
\end{align*}
We will show that, although $A$, $\tilde{A}(0)$ are two different sets, they are roughly comparable with respect to set inclusion. To see this, note that by Lemma \ref{calderlem:epsk} and \eqref{def:eps}, we can assume
\begin{align}
    \nonumber |\varepsilon_{k}(x)^{m}-\varepsilon(x)^{m}| &\leq C\delta\varepsilon(x)^{m} \\
    \label{epsassump} \max_{1\leq i\leq n} |\varepsilon_{k}(x_{i})^{m}-\varepsilon(x_{i})^{m}| &\leq C\delta\varepsilon(x_{i})^{m}
\end{align}
for some fixed $C(k/n)^{2/m}\leq\delta\leq 1$. It follows that
\begin{equation*}
    (1-C\delta)\varepsilon(x,x_{i})\leq \varepsilon_{k}(x,x_{i})\leq (1+C\delta)\varepsilon(x,x_{i}),
\end{equation*}
and hence
\begin{equation} \label{Ainclusion}
    \tilde{A}(-C\delta)\subset A, \tilde{A}(0) \subset \tilde{A}(C\delta).
\end{equation}
Therefore
\begin{equation} \label{L2L1}
    |\mathcal{L}^{\sharp}u(x)-\mathcal{L}^1u(x)|\leq \frac{(\alpha n/k)^{1+2/m}}{2^{m+2}n}\bigg[\sum_{i\in \tilde{A}(C\delta)\setminus\tilde{A}(-C\delta)} +  \sum_{i\in\tilde{A}(-C\delta)} \frac{n|w_{k}(x,x_{i})-w(x,x_{i})|}{k} |f(x)-f(x_{i})|\bigg].
\end{equation}
To handle the first sum, we utilize the Chernoff's bounds - and recall \eqref{def:eps} again - to obtain that
\begin{equation} \label{chern}
    \mathbb{P}(card(\tilde{A}(C\delta))- card(\tilde{A}(-C\delta))\geq C\delta n\varepsilon(x)^{m})\leq 2\exp(-c_{p}\delta^2 k).
\end{equation}
It follows from definition and \eqref{epsassump} that $|w_{k}(x,x_{i})-w(x,x_{i})|\leq C_{p}k/n$; using this and \eqref{epsassump}, \eqref{chern}, we have the first sum in \eqref{L2L1} dominated by
\begin{equation} \label{firstsum}
    \frac{(\alpha n/k)^{1+2/m}}{2^{m+2}n}\sum_{i\in \tilde{A}(C\delta)\setminus\tilde{A}(-C\delta)} \frac{n|w_{k}(x,x_{i})-w(x,x_{i})|}{k}|f(x)-f(x_{i})|
   \leq C_{p}\bigg(\frac{k}{n}\bigg)^{-1/m}\delta[f]_{1;\mathcal{B}(x,2\varepsilon^{\star})},
\end{equation}
with probability at least $1-2\exp(-c_{p}\delta^2 k)$. To handle the second sum in \eqref{L2L1}, we use the following lemma. 

\begin{lemma} \label{lem:twoweights}
Suppose \eqref{epsassump} holds. Then for every $x\in\mathcal{M}$, $i\in\tilde{A}(-C\delta)$, 
\begin{equation} \label{lem2wconc}
    |w_{k}(x,x_{i})-w(x,x_{i})|\leq C_{p}\delta\bigg(\frac{k}{n}\bigg).
\end{equation}
\end{lemma}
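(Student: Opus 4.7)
My plan is to observe that the statement is deterministic once we condition on \eqref{epsassump}, and to reduce the claim to a direct estimate on the volumes of two thin geodesic annuli, one centered at $x$ and one at $x_i$, with no further probabilistic input required.

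I would start by applying the pointwise product-splitting identity $\eta_1\eta_2-\eta_1'\eta_2' = \eta_1(\eta_2-\eta_2')+(\eta_1-\eta_1')\eta_2'$ inside the integrands defining $w_k(x,x_i)$ and $w(x,x_i)$, taking $\eta_1,\eta_1'$ to be the factors $\eta(d(x,z)/\varepsilon_k(x)),\eta(d(x,z)/\varepsilon(x))$ and $\eta_2,\eta_2'$ the corresponding factors at $x_i$. Since $\eta\in[0,1]$ and each $|\eta(d(y,z)/r_1)-\eta(d(y,z)/r_2)|$ equals the indicator of the symmetric difference $\Sigma_y := \mathcal{B}(y,r_1)\triangle\mathcal{B}(y,r_2)$ with $r_1=\varepsilon_k(y)$, $r_2=\varepsilon(y)$, integrating against $p\,d\mathcal{V}$ will produce
\begin{equation*}
|w_k(x,x_i) - w(x,x_i)| \leq p_{max}\bigl[\mathcal{V}(\Sigma_x) + \mathcal{V}(\Sigma_{x_i})\bigr].
\end{equation*}

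Next, because concentric geodesic balls are nested by their radii, each $\Sigma_y$ is a genuine annulus whose volume equals $|\mathcal{V}(\mathcal{B}(y,\varepsilon_k(y)))-\mathcal{V}(\mathcal{B}(y,\varepsilon(y)))|$. The volume-exchange estimate \eqref{volexch} writes each of the two volumes as $\alpha r^m + O(Kr^{m+2})$, so
\begin{equation*}
\mathcal{V}(\Sigma_y) \leq \alpha\,|\varepsilon_k(y)^m - \varepsilon(y)^m| + CK\bigl(\varepsilon_k(y)^{m+2}+\varepsilon(y)^{m+2}\bigr).
\end{equation*}
Hypothesis \eqref{epsassump}, applied at $y=x$ and uniformly at $y=x_i$, then gives $|\varepsilon_k(y)^m-\varepsilon(y)^m|\leq C\delta\,\varepsilon(y)^m$ and in particular $\varepsilon_k(y)\leq C\varepsilon(y)$; combined with the normalization $\varepsilon(y)^m = k/(\alpha n p(y))\leq C_p k/n$ from \eqref{def:eps}, this yields $\mathcal{V}(\Sigma_y)\leq C_p\delta(k/n)+O(K(k/n)^{1+2/m})$.

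To finish, one needs to absorb the curvature correction into the main term: since the lower bound $\delta\geq C(k/n)^{2/m}$ is attached to \eqref{epsassump} via Lemma \ref{calderlem:epsk}, the correction $K(k/n)^{1+2/m} = K(k/n)^{2/m}\cdot(k/n)$ is itself $O_p(\delta(k/n))$, and summing over $y\in\{x,x_i\}$ gives the claim. The main obstacle I foresee is precisely this absorption step, which relies on the fact that \eqref{epsassump} is only being applied with $\delta$ at least of order $(k/n)^{2/m}$; I also note that the restriction $i\in\tilde{A}(-C\delta)$ does not enter the argument in an essential way, since when the two geodesic balls fail to intersect, both $w_k(x,x_i)$ and $w(x,x_i)$ already vanish and the claimed inequality holds trivially.
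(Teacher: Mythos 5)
Your argument is correct and follows essentially the same route as the paper's proof: the identical telescoping decomposition of $w_{k}(x,x_{i})-w(x,x_{i})$ into two annulus (symmetric-difference) contributions, each controlled by the radius comparison \eqref{epsassump} and the normalization \eqref{def:eps}. The only difference is cosmetic --- the paper evaluates the annulus volumes in normal coordinates, where the Jacobian is bounded by a constant and the annulus has exactly Euclidean volume $\alpha|\varepsilon_{k}(y)^{m}-\varepsilon(y)^{m}|$, whereas you apply \eqref{volexch} on the manifold and then absorb the curvature remainder $O(K(k/n)^{1+2/m})$ using $\delta\geq C(k/n)^{2/m}$; your observation that the restriction $i\in\tilde{A}(-C\delta)$ plays no essential role is also consistent with what the paper's argument actually uses.
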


By invoking Lemma \ref{calderlem:N} and \eqref{Ainclusion}, we obtain that $card(\tilde{A}(-C\delta))\leq C_{p}k$. Now specifying Lemma \ref{lem:twoweights} to the second sum in \eqref{L2L1} yields
\begin{equation} \label{secondsum}
    \frac{(\alpha n/k)^{1+2/m}}{2^{m+2}n}\sum_{i\in\tilde{A}(-C\delta)} \frac{n|w_{k}(x,x_{i})-w(x,x_{i})|}{k}|f(x)-f(x_{i})|\leq C_{p}\bigg(\frac{k}{n}\bigg)^{-1/m}\delta[f]_{1;\mathcal{B}(x,2\varepsilon^{\star})}.
\end{equation}
We combine \eqref{firstsum}, \eqref{secondsum} and let $\delta=C (k/n)^{1/m}t$ for some $ (k/n)^{1/m}\leq t\leq (k/n)^{-1/m}$ to obtain the conclusion of Lemma \ref{lem:secondevol}. \qed\\

\noindent {\it Proof of Lemma \ref{lem:twoweights}.} Note that the condition $i\in\tilde{A}(-C\delta)$ and \eqref{Ainclusion} make both $w_{k}(x,x_{i})$, $w(x,x_{i})$ nonzero in \eqref{lem2wconc}; if not, the bound there can be as large as $C_{p}k/n$. Denote $\mathcal{B}(x):=\mathcal{B}(x,\varepsilon(x))$. Suppose $\varepsilon_{k}(x)\leq\varepsilon(x)$. We write $w_{k}(x,x_{i})-w(x,x_{i})$ as the following sum
\begin{multline} \label{weightdifference}
    \bigg(\int_{\mathcal{B}_{k}(x)} 1_{\mathcal{B}_{k}(x_{i})}(z)p(z)\,d\mathcal{V}(z) - \int_{\mathcal{B}(x)} 1_{\mathcal{B}_{k}(x_{i})}(z)p(z)\,d\mathcal{V}(z)\bigg)\\
    + \bigg(\int_{\mathcal{B}_{k}(x_{i})} 1_{\mathcal{B}(x)}(z)p(z)\,d\mathcal{V}(z) - \int_{\mathcal{B}(x_{i})} 1_{\mathcal{B}(x)}(z)p(z)\,d\mathcal{V}(z)\bigg).
\end{multline}
We express \eqref{weightdifference} in normal coordinates, using the convention established in Remark \ref{rem:tilde}:
\begin{multline} \label{wdiff1}
    \int_{\mathcal{B}(x)} 1_{\mathcal{B}_{k}(x_{i})}(z)p(z)\,d\mathcal{V}(z)-\int_{\mathcal{B}_{k}(x)} 1_{\mathcal{B}_{k}(x_{i})}(z)p(z)\,d\mathcal{V}(z)\\
    =\int_{B(0,\varepsilon(x))\subset T_{x}\mathcal{M}} \eta\bigg(\frac{|u-v|}{\varepsilon_{k}(x_{i})}\bigg)\bigg(\eta\bigg(\frac{|u|}{\varepsilon(x)}\bigg) - \eta\bigg(\frac{|u|}{\varepsilon_{k}(x)}\bigg)\bigg) \tilde{p}(u)J_{x}(u)\,du
\end{multline}
where we've centered at $x$, and so $\exp_{x}(0)=x$, $\exp_{x}(v)=x_{i}$, $\exp_{x}(u)=z$. It follows from this and \eqref{epsassump} that
\begin{align*}
    \bigg|\int_{\mathcal{B}_{k}(x)} 1_{\mathcal{B}_{k}(x_{i})}(z)p(z)\,d\mathcal{V}(z)-\int_{\mathcal{B}(x)} 1_{\mathcal{B}_{k}(x_{i})}(z)p(z)\,d\mathcal{V}(z)\bigg| &\leq Cp_{max}|Vol(B(0,\varepsilon_{k}(x))-Vol(B(0,\varepsilon(x))|\\
    &\leq Cp_{max}\delta\varepsilon(x)^{m}\leq C_{p}\delta(k/n).
\end{align*}
Note that \eqref{wdiff1} corresponds to the first term in \eqref{weightdifference}; the second term can be handled similarly, and so we conclude the lemma. \qed

\subsection{Properties of $\tilde{\omega}_0$} \label{sec:frakw}

Once again, we denote $\varepsilon:=\tilde{\varepsilon}(0)=\varepsilon(x)$. It's clear from \eqref{omega0}, \eqref{jacobian} that
\begin{equation*}
    \tilde{\omega}_0(0) = (\alpha n/k)\int_{B(0,\varepsilon)} \eta\bigg(\frac{|u|}{\varepsilon}\bigg) \tilde{p}(u)J_{x}(u)\,du = \int_{B(0,1)} \,du + O(\varepsilon) = \alpha + O(\varepsilon). 
\end{equation*}

\noindent {\it Derivatives.} For $i=1,\cdots,m$, let $\partial_{i}$ denote the partial derivative along the $i$th axis. Observe that, for every $v\in B(0,2\varepsilon^{\star})$ such that $v=v_{i}e_{i}$, where $v_{i}\in\mathbb{R}$ and $e_{i}$ the standard $i$th basis vector,
\begin{equation*}
    \tilde{\omega}_0(v)\leq\tilde{\omega}_0(0).
\end{equation*}
Hence $0$ is a global maximum of $\tilde{\omega}_0$ along the $i$th axis. If $\partial_{i}\tilde{\omega}_0(0)$ exists, this would mean
\begin{equation} \label{partialzero}
    \partial_{i}\tilde{\omega}_0(0)=0,
\end{equation}
and if all the partial derivatives $\partial_{i}\tilde{\omega}_0$ exist in a vicinity of $0$ and are continuous at $0$, it would entail that $\nabla\tilde{\omega}_0(0)=0$, which is \eqref{omegader}. To show these facts, we use a differentiation technique in fluid mechanics. It goes as follows.\\
Let $\xi(\tau)$ be a smooth vector field, $\tau\in\mathbb{R}$, and let $\xi'(\tau)=:\upsilon(\tau)$. Let $\mathcal{F}(\xi(\tau),\tau)$ be a density function of space and time. Let $R(\tau)$ be a region varying with time. Then the Reynold's transport equation \cite{gatski2013compressibility} states that
\begin{equation} \label{reygatski}
    \frac{d}{d\tau}\bigg(\int_{R(\tau)} \mathcal{F}(\xi(\tau),\tau)\,d\xi(\tau)\bigg) = \sum_{l}\int_{R(\tau)} \partial_{l}\{\mathcal{F}(\xi(\tau),\tau)(\upsilon(\tau))_{l}\} \,d\xi(\tau) + \int_{R(\tau)} \partial_{\tau}\{\mathcal{F}(\xi(\tau),\tau)\}\,d\xi(\tau),
\end{equation}
where $\partial_{\tau}$ denotes the partial derivative in terms of the time argument $\tau$. 

\begin{figure}
    \centering
    \includegraphics[width=1.1\textwidth]{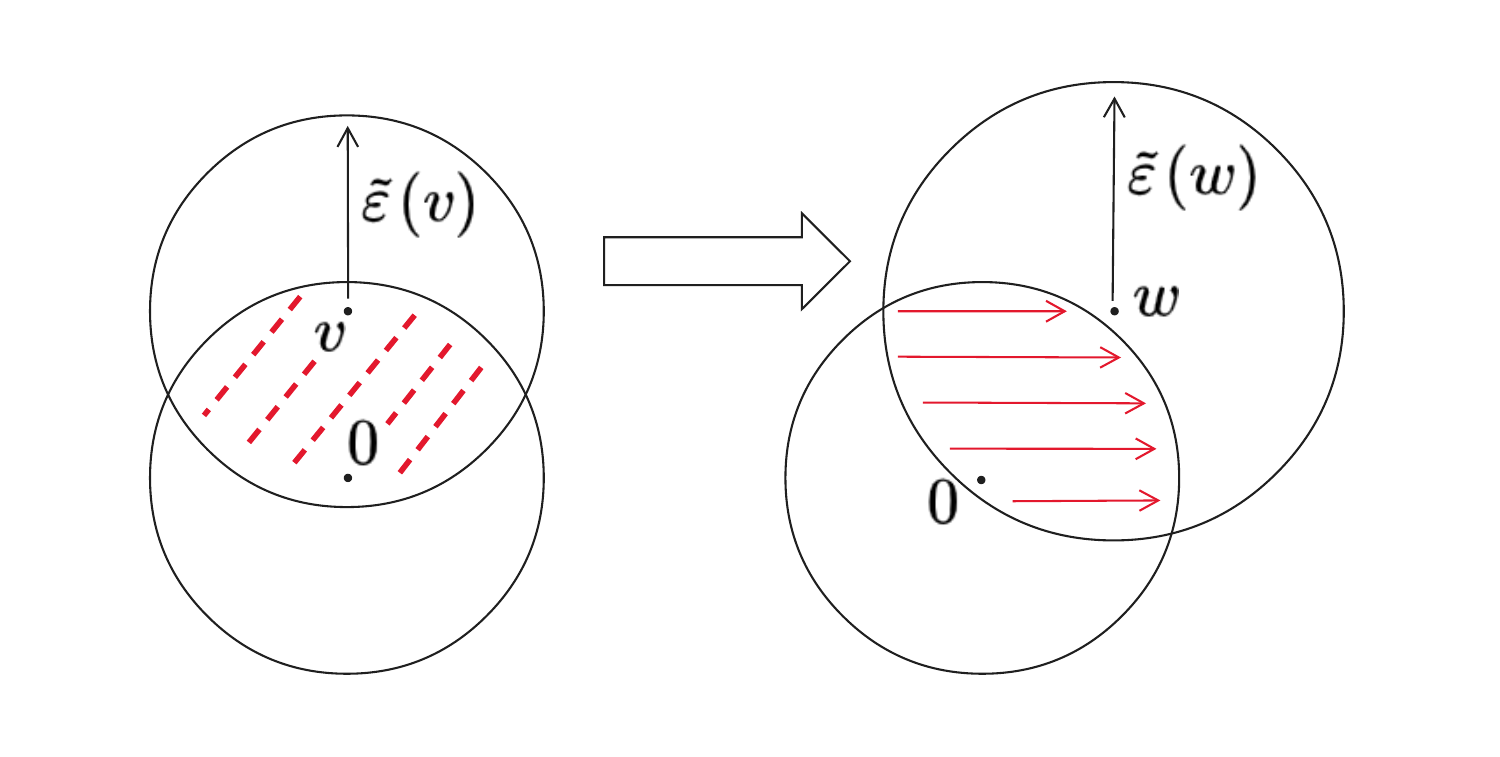}
    \caption{Interpreting $\tilde{\omega}_0$ as a moving mass. (Left): The mass at time $\tau=0$. (Right): The mass at some time $\tau^{*}$, and $v$ moves to $w=v+e_{i}\tau^{*}$.}
    \label{fig:reyn}
\end{figure}

We apply \eqref{reygatski} to our context. Take $v\in B(0,2\varepsilon^{\star})$. Let $\gamma_{v}(\tau):= v + e_{i}\tau$ and $R_{v}(\tau):= B(0,\varepsilon)\cap B(\gamma_{v}(\tau),\tilde{\varepsilon}(\gamma_{v}(\tau)))$, for $\tau\in\mathbb{R}$. Let $\xi^{i}(\tau)$ denote a constant-velocity vector flow with $(\xi^{i})'(\tau)=e_{i}$. Then (see Figure \ref{fig:reyn})
\begin{equation*}
    \partial_{i}\tilde{\omega}_0(v) = (\alpha n/k)\frac{d}{d\tau}\bigg(\int_{R_{v}(\tau)}\tilde{p}(\xi^{i}(\tau))J_{x}(\xi^{i}(\tau))\,d\xi^{i}(\tau)\bigg)|_{\tau=0},
\end{equation*}
if exists. Comparing this to \eqref{reygatski}, we obtain
\begin{equation} \label{reyn}
    \frac{d}{d\tau}\bigg(\int_{R_{v}(\tau)}\tilde{p}(\xi^{i}(\tau))J_{x}(\xi^{i}(\tau))\,d\xi^{i}(\tau)\bigg) = \int_{R_{v}(\tau)} \partial_{i}\{\tilde{p}(\xi^{i}(\tau))J_{x}(\xi^{i}(\tau))\} \,d\xi^{i}(\tau).
\end{equation}
Since the exponential map $\exp$ (\eqref{expmap}) is guaranteed to be a diffeomorphism by Assumption \eqref{assumption} and $p\in C^2(\mathcal{M})$, the derivative in \eqref{reyn} exists, and so does $\partial_{i}\tilde{\omega}_0(v)$. Hence \eqref{partialzero} holds. It remains to observe that $\gamma_{v}$ is continuous with respect to $v$, and so is $\tilde{\varepsilon}(\gamma_{v})$. Therefore, $Vol_{m}(R_{v}(0))\to Vol_{m}(R_{0}(0))$ in \eqref{reyn}; this is enough to conclude that $\partial_{i}\tilde{\omega}_0(v)\to \partial_{i}\tilde{\omega}_0(0)$ when $v\to 0$, for all $i$, and consequently, that $\nabla\tilde{\omega}_0(0)=0$.\\

Next, we consider $\partial_{j}\partial_{i}\tilde{\omega}_0(v)$. To do so, we define $\beta_{v}(t,\tau):= v + e_{j}t + e_{i}\tau$, for $t,\tau\in\mathbb{R}$ and
\begin{equation*}
    R_{v}(t,\tau) := B(0,\varepsilon)\cap B(\beta_{v}(t,\tau),\tilde{\varepsilon}(\beta_{v}(t,\tau))).
\end{equation*}
Let $\xi^{ij}(t,\tau)$ be such that $\partial_{t}\xi^{ij}(t,\tau) =e_{j}$ and $\partial_{\tau}\xi^{ij}(t,\tau) =e_{i}$. By replacing $v$ with $v+e_{j}t$ in \eqref{reyn}, we can write, 
\begin{equation*} 
    \partial_{j}\partial_{i}\tilde{\omega}_0(v)
    = (\alpha n/k) \frac{d}{dt}\bigg(\int_{R_{v}(t,0)} \partial_{i}\{\tilde{p}(\xi^{ij}(t,0))J_{x}(\xi^{ij}(t,0))\} \,d\xi^{ij}(t,0)\bigg)|_{t=0}.
\end{equation*}
Invoking \eqref{reygatski} again, we have
\begin{equation} \label{second1}
    \partial_{j}\partial_{i}\tilde{\omega}_0(\beta_{v}(t,0)) = (\alpha n/k)\int_{R_{v}(t,0)} \partial_{j}\partial_{i}\{\tilde{p}(\xi^{ij}(t,0))J_{x}(\xi^{ij}(t,0))\} \,d\xi^{ij}(t,0)=:(\alpha n/k)I(t).
\end{equation}
Due to the discussed regularity of $p$ and the exponential map, $I(t)$ exists. Furthermore, let $t=0$ in \eqref{second1}. Then since $R_{v}(0,0)=B(0,\varepsilon)\cap B(v,\tilde{\varepsilon}(v))$,
\begin{equation} \label{second2}
    (\alpha n/k) |I(0)|\leq C_{p}(\alpha n/k)\int_{R_{v}(0,0)}\,du \leq C_{p}(\alpha n/k)Vol_{m}(B(0,\varepsilon))\leq C_{p}
\end{equation}
where we've denoted $u=\xi^{ij}(0,0)$. The continuity of $\partial_{j}\partial_{i}\tilde{\omega}_0(v)$ with respect to $v$ is again easily observed; we conclude from \eqref{second1}, \eqref{second2} that $\tilde{\omega}_0\in C^2(B(0,2\varepsilon^{\star}))$ and $\|\tilde{\omega}_0\|_{C^2(B(0,2\varepsilon^{\star}))}=O(1)$.


\begin{thebibliography}{10}

\bibitem{anderson2004boundary}
Michael Anderson, Atsushi Katsuda, Yaroslav Kurylev, Matti Lassas, and Michael
  Taylor.
\newblock Boundary regularity for the ricci equation, geometric convergence,
  and gel’fand’s inverse boundary problem.
\newblock {\em Inventiones mathematicae}, 158:261--321, 2004.

\bibitem{antunes2014fast}
Arm{\'e}nio Antunes, Maribel~Yasmina Santos, and Adriano Moreira.
\newblock Fast snn-based clustering approach for large geospatial data sets.
\newblock In {\em Connecting a Digital Europe Through Location and Place},
  pages 179--195. Springer, 2014.

\bibitem{belkin2002semi}
Mikhail Belkin and Partha Niyogi.
\newblock Semi-supervised learning on manifolds.
\newblock {\em Machine Learning Journal}, 1, 2002.

\bibitem{belkin2003laplacian}
Mikhail Belkin and Partha Niyogi.
\newblock Laplacian eigenmaps for dimensionality reduction and data
  representation.
\newblock {\em Neural computation}, 15(6):1373--1396, 2003.

\bibitem{belkin2005towards}
Mikhail Belkin and Partha Niyogi.
\newblock Towards a theoretical foundation for laplacian-based manifold
  methods.
\newblock In {\em International Conference on Computational Learning Theory},
  pages 486--500. Springer, 2005.

\bibitem{beyer1999nearest}
Kevin Beyer, Jonathan Goldstein, Raghu Ramakrishnan, and Uri Shaft.
\newblock When is “nearest neighbor” meaningful?
\newblock In {\em International conference on database theory}, pages 217--235.
  Springer, 1999.

\bibitem{bousquet2003measure}
Olivier Bousquet, Olivier Chapelle, and Matthias Hein.
\newblock Measure based regularization.
\newblock {\em Advances in Neural Information Processing Systems}, 16, 2003.

\bibitem{burago2015graph}
Dmitri Burago, Sergei Ivanov, and Yaroslav Kurylev.
\newblock A graph discretization of the laplace--beltrami operator.
\newblock {\em Journal of Spectral Theory}, 4(4):675--714, 2015.

\bibitem{calder2022improved}
Jeff Calder and Nicolas~Garcia Trillos.
\newblock Improved spectral convergence rates for graph laplacians on
  $\varepsilon$-graphs and k-nn graphs.
\newblock {\em Applied and Computational Harmonic Analysis}, 2022.

\bibitem{chapelle2006semi}
Olivier Chapelle, Bernhard Scholkopf, and Alexander Zien.
\newblock Semi-supervised learning. 2006.
\newblock {\em Cambridge, Massachusettes: The MIT Press View Article}, 2, 2006.

\bibitem{chavel1984eigenvalues}
Isaac Chavel.
\newblock {\em Eigenvalues in Riemannian geometry}.
\newblock Academic press, 1984.

\bibitem{cheng2022convergence}
Xiuyuan Cheng and Hau-Tieng Wu.
\newblock Convergence of graph laplacian with knn self-tuned kernels.
\newblock {\em Information and Inference: A Journal of the IMA},
  11(3):889--957, 2022.

\bibitem{chung1996combinatorial}
Fan~RK Chung and Robert~P Langlands.
\newblock A combinatorial laplacian with vertex weights.
\newblock {\em journal of combinatorial theory, Series A}, 75(2):316--327,
  1996.

\bibitem{coifman2006diffusion}
Ronald~R Coifman and St{\'e}phane Lafon.
\newblock Diffusion maps.
\newblock {\em Applied and computational harmonic analysis}, 21(1):5--30, 2006.

\bibitem{coifman2005geometric}
Ronald~R Coifman, Stephane Lafon, Ann~B Lee, Mauro Maggioni, Boaz Nadler,
  Frederick Warner, and Steven~W Zucker.
\newblock Geometric diffusions as a tool for harmonic analysis and structure
  definition of data: Diffusion maps.
\newblock {\em Proceedings of the national academy of sciences},
  102(21):7426--7431, 2005.

\bibitem{do1992riemannian}
Manfredo~Perdigao Do~Carmo and J~Flaherty~Francis.
\newblock {\em Riemannian geometry}, volume~6.
\newblock Springer, 1992.

\bibitem{ertoz2003finding}
Levent Ert{\"o}z, Michael Steinbach, and Vipin Kumar.
\newblock Finding clusters of different sizes, shapes, and densities in noisy,
  high dimensional data.
\newblock In {\em Proceedings of the 2003 SIAM international conference on data
  mining}, pages 47--58. SIAM, 2003.

\bibitem{ertoz2004finding}
Levent Ert{\"o}z, Michael Steinbach, and Vipin Kumar.
\newblock Finding topics in collections of documents: A shared nearest neighbor
  approach.
\newblock In {\em Clustering and information retrieval}, pages 83--103.
  Springer, 2004.

\bibitem{evans2010partial}
Lawrence~C Evans.
\newblock {\em Partial differential equations}, volume~19.
\newblock American Mathematical Soc., 2010.

\bibitem{faustino2014kd}
Bruno~Filipe Faustino, Jo{\~a}o Moura-Pires, Maribel~Yasmina Santos, and
  Guilherme Moreira.
\newblock kd-snn: a metric data structure seconding the clustering of spatial
  data.
\newblock In {\em International Conference on Computational Science and Its
  Applications}, pages 312--327. Springer, 2014.

\bibitem{folland1999real}
Gerald~B Folland.
\newblock {\em Real analysis: modern techniques and their applications},
  volume~40.
\newblock John Wiley \& Sons, 1999.

\bibitem{garcia2019variational}
Nicolas Garcia~Trillos.
\newblock Variational limits of k-nn graph-based functionals on data clouds.
\newblock {\em SIAM Journal on Mathematics of Data Science}, 1(1):93--120,
  2019.

\bibitem{garcia2020error}
Nicol{\'a}s Garc{\'\i}a~Trillos, Moritz Gerlach, Matthias Hein, and Dejan
  Slep{\v{c}}ev.
\newblock Error estimates for spectral convergence of the graph laplacian on
  random geometric graphs toward the laplace--beltrami operator.
\newblock {\em Foundations of Computational Mathematics}, 20(4):827--887, 2020.

\bibitem{gatski2013compressibility}
Thomas~B Gatski and Jean-Paul Bonnet.
\newblock {\em Compressibility, turbulence and high speed flow}.
\newblock Academic Press, 2013.

\bibitem{gine2006empirical}
Evarist Gin{\'e} and Vladimir Koltchinskii.
\newblock Empirical graph laplacian approximation of laplace-beltrami
  operators: large sample results.
\newblock {\em Lecture Notes-Monograph Series}, pages 238--259, 2006.

\bibitem{guha1998cure}
Sudipto Guha, Rajeev Rastogi, and Kyuseok Shim.
\newblock Cure: An efficient clustering algorithm for large databases.
\newblock {\em ACM Sigmod record}, 27(2):73--84, 1998.

\bibitem{gyorfi2002distribution}
L{\'a}szl{\'o} Gy{\"o}rfi, Michael Kohler, Adam Krzyzak, Harro Walk, et~al.
\newblock {\em A distribution-free theory of nonparametric regression},
  volume~1.
\newblock Springer, 2002.

\bibitem{hebey1996sobolev}
Emmanuel Hebey.
\newblock {\em Sobolev spaces on Riemannian manifolds}, volume 1635.
\newblock Springer Science \& Business Media, 1996.

\bibitem{hein2007graph}
Matthias Hein, Jean-Yves Audibert, and Ulrike~von Luxburg.
\newblock Graph laplacians and their convergence on random neighborhood graphs.
\newblock {\em Journal of Machine Learning Research}, 8(6), 2007.

\bibitem{houle2003navigating}
Michael~E Houle.
\newblock Navigating massive data sets via local clustering.
\newblock In {\em Proceedings of the ninth ACM SIGKDD international conference
  on Knowledge discovery and data mining}, pages 547--552, 2003.

\bibitem{houle2008relevant}
Michael~E Houle.
\newblock The relevant-set correlation model for data clustering.
\newblock {\em Statistical Analysis and Data Mining: The ASA Data Science
  Journal}, 1(3):157--176, 2008.

\bibitem{houle2010can}
Michael~E Houle, Hans-Peter Kriegel, Peer Kr{\"o}ger, Erich Schubert, and
  Arthur Zimek.
\newblock Can shared-neighbor distances defeat the curse of dimensionality?
\newblock In {\em International conference on scientific and statistical
  database management}, pages 482--500. Springer, 2010.

\bibitem{jarvis1973clustering}
Raymond~Austin Jarvis and Edward~A Patrick.
\newblock Clustering using a similarity measure based on shared near neighbors.
\newblock {\em IEEE Transactions on computers}, 100(11):1025--1034, 1973.

\bibitem{kriegel2009outlier}
Hans-Peter Kriegel, Peer Kr{\"o}ger, Erich Schubert, and Arthur Zimek.
\newblock Outlier detection in axis-parallel subspaces of high dimensional
  data.
\newblock In {\em Pacific-asia conference on knowledge discovery and data
  mining}, pages 831--838. Springer, 2009.

\bibitem{kumari2016scalable}
Sonal Kumari, Saurabh Maurya, Poonam Goyal, Sundar~S Balasubramaniam, and
  Navneet Goyal.
\newblock Scalable parallel algorithms for shared nearest neighbor clustering.
\newblock In {\em 2016 IEEE 23rd International Conference on High Performance
  Computing (HiPC)}, pages 72--81. IEEE, 2016.

\bibitem{mohar1997some}
Bojan Mohar.
\newblock Some applications of laplace eigenvalues of graphs.
\newblock In {\em Graph symmetry}, pages 225--275. Springer, 1997.

\bibitem{ng2001spectral}
Andrew Ng, Michael Jordan, and Yair Weiss.
\newblock On spectral clustering: Analysis and an algorithm.
\newblock {\em Advances in neural information processing systems}, 14, 2001.

\bibitem{singer2006graph}
Amit Singer.
\newblock From graph to manifold laplacian: The convergence rate.
\newblock {\em Applied and Computational Harmonic Analysis}, 21(1):128--134,
  2006.

\bibitem{spielman2019spectral}
Daniel Spielman.
\newblock Spectral and algebraic graph theory.
\newblock {\em Yale lecture notes, draft of December}, 4:47, 2019.

\bibitem{ting2011analysis}
Daniel Ting, Ling Huang, and Michael Jordan.
\newblock An analysis of the convergence of graph laplacians.
\newblock {\em arXiv preprint arXiv:1101.5435}, 2011.

\bibitem{toth2017handbook}
Csaba~D Toth, Joseph O'Rourke, and Jacob~E Goodman.
\newblock {\em Handbook of discrete and computational geometry}.
\newblock CRC press, 2017.

\bibitem{von2007tutorial}
Ulrike Von~Luxburg.
\newblock A tutorial on spectral clustering.
\newblock {\em Statistics and computing}, 17(4):395--416, 2007.

\bibitem{xu2015identification}
Chen Xu and Zhengchang Su.
\newblock Identification of cell types from single-cell transcriptomes using a
  novel clustering method.
\newblock {\em Bioinformatics}, 31(12):1974--1980, 2015.

\bibitem{zhu2002learning}
Xiaojin Zhu and Zoubin Ghahramani.
\newblock Learning from labeled and unlabeled data with label propagation.
\newblock 2002.

\bibitem{zhu2003semi}
Xiaojin Zhu, Zoubin Ghahramani, and John~D Lafferty.
\newblock Semi-supervised learning using gaussian fields and harmonic
  functions.
\newblock In {\em Proceedings of the 20th International conference on Machine
  learning (ICML-03)}, pages 912--919, 2003.

\end{thebibliography}

\end{document}